\tikzstyle{curly} = [decorate,decoration={brace,amplitude=10pt}]
\newcommand \pol {\pi}
\newcommand \mdp {\mu}
\DeclareMathAlphabet{\mathpzc}{OT1}{pzc}{m}{it}
\definecolor{wheat}{rgb}{0.96,0.87,0.70}
\definecolor{lightblue}{rgb}{0.8,0.8,1}
\definecolor{lightred}{rgb}{1,0.8,0.8}
\definecolor{lightgreen}{rgb}{0.8,1,0.8}
\begin{document}

\title{Reinforcement Learning in the Wild\\ with Maximum Likelihood-based Model Transfer}

\author{\name Hannes Eriksson \email hannes.eriksson@zenseact.com \\
       \addr Zenseact, Gothenburg, Sweden\\
       Chalmers University of Technology, Gothenburg, Sweden\\
       \AND
		\name Debabrota Basu \\
		\addr Scool, INRIA Lille- Nord Europe, Lille, France\\
		CRIStAL, CNRS, Lille, France\\
		 \AND
        \name Tommy Tram \\
       \addr Zenseact, Gothenburg, Sweden\\
       Chalmers University of Technology, Gothenburg, Sweden\\
	    \AND
	    \name Mina Alibeigi\\
	    \addr Zenseact, Gothenburg, Sweden\\
	    \AND
	    \name Christos Dimitrakakis\\
	    \addr University of Oslo, Oslo, Norway\\
	    University of Neuchatel, Neuchatel, Switzerland
	}

\editor{}

\maketitle

\begin{abstract}
In this paper, we study the problem of transferring the available Markov Decision Process (MDP) models to learn and plan efficiently in an unknown but similar MDP. We refer to it as \textit{Model Transfer Reinforcement Learning (MTRL)} problem. First, we formulate MTRL for discrete MDPs and Linear Quadratic Regulators (LQRs) with continuous state actions.
Then, we propose a generic two-stage algorithm, MLEMTRL, to address the MTRL problem in discrete and continuous settings. In the first stage, MLEMTRL uses a \textit{constrained Maximum Likelihood Estimation (MLE)}-based approach to estimate the target MDP model using a set of known MDP models. In the second stage, using the estimated target MDP model, MLEMTRL deploys a model-based planning algorithm appropriate for the MDP class.
Theoretically, we prove worst-case regret bounds for MLEMTRL both in realisable and non-realisable settings.
We empirically demonstrate that MLEMTRL allows faster learning in new MDPs than learning from scratch and achieves near-optimal performance depending on the similarity of the available MDPs and the target MDP.
\end{abstract}

\begin{keywords}
    Reinforcement Learning, Transfer Learning, Maximum Likelihood Estimation, Linear Quadratic Regulator
\end{keywords}

\section{Introduction}\label{sec:intro}

Deploying autonomous agents in the real world poses a wide variety of challenges. As in~\citet{dulac2021challenges}, we are often required to learn the real-world model with limited data, and use it to plan to achieve satisfactory performance in the real world. There might also be safety and reproducibility constraints, which require us to track a model of the real-world environment~\citep{skirzynski2021automatic}.
In light of these challenges, we attempt to construct a framework that can aptly deal with optimal decision making for a novel task, by leveraging external knowledge. As the novel task is unknown, we adopt the Reinforcement Learning (RL)~\citep{sutton2018reinforcement} framework to guide an agent's learning process and to achieve near-optimal decisions.

An RL agent interacts directly with the environment to improve its performance. Specifically, in model-based RL, the agent tries to learn a model of the environment and then use it to improve performance~\citep{moerland2020model}. In many applications, the depreciation in performance due to sub-optimal model learning can be paramount. For example, if the agent interacts with living things or expensive equipment, decision-making with an imprecise model might incur significant cost~\citep{polydoros2017survey}. In such instances, boosting the model learning by leveraging external knowledge from the existing models, such as simulators~\citep{peng2018sim}, physics-driven engines, etc., can be of great value~\citep{taylor2008transferring}. A model trained on simulated data may perform reasonably well when deployed in a new environment, given the novel environment is \emph{similar enough} to the simulated model. 
Also, RL algorithms running on different environments yield data and models that can be used to plan in another similar enough real-life environment.
In this work, we study the problem where we have access to multiple source models built using simulators or data from other environments, and we want to transfer the source models to perform efficient model-based RL in a different real-life environment.

\begin{example}
Let us consider that a company is designing autonomous driving agents for different countries in the world. The company has designed two RL agents that have learned to drive well in USA and UK. Now, the company wants to deploy a new RL agent in India. Though all the RL agents are concerned with the same task, i.e. driving, the models encompassing driver behaviors, traffic rules, signs, etc., can differ for each. For example, UK and India have left-handed traffic, while the USA has right-handed traffic.  However, learning a new controller \emph{specifically} for every new geographic location is computationally expensive and time-consuming, as both data collection and learning take time. Thus, the company might use the models learned for UK and USA, to estimate the model for India, and use it further to build a new autonomous driving agent (RL agent). Hence, being able to transfer the source models to the target environment allows the company to use existing knowledge to build an efficient agent faster and resource efficiently.\vspace*{-1em}
\end{example}
\textit{We address this problem of model transfer from source models to a target environment to plan efficiently.} We observe that this problem falls at the juncture of \emph{transfer learning} and \emph{reinforcement learning}~\citep{taylor2009transfer,lazaric2012transfer,laroche2017transfer}. 
\citet{lazaric2012transfer} enlists three approaches to transfer knowledge from the \emph{source tasks} to a \emph{target task}. (i) \emph{Instance transfer:} data from the source tasks is used to guide decision-making in the novel task~\citep{taylor2008transferring}. (ii) \emph{Representation transfer:} a representation of the task, such as learned neural network features, are transferred to perform the new task~\citep{zhang2018decoupling}. (iii) \emph{Parameter transfer:} the parameters of the RL algorithm or \emph{policy} are transferred~\citep{rusu2015policy}. In our paper, the source tasks are equivalent to the source models, and the target task is the target environment. Moreover, we adopt the \textbf{model transfer} approach (MTRL), which encompasses both (i) and (ii) (Section~\ref{sec:trl}). 

\citet{langley2006transfer} describes three possible benefits of transfer learning. The first is~\textbf{learning speed improvement}, i.e. decreasing the amount of data required to learn the solution. Secondly, \textbf{asymptotic improvement}, where the solution results in better asymptotic performance. Lastly, \textbf{jumpstart improvement}, where the initial proxy model serves as a better starting solution than that of one learning the true model from scratch. In this work, we propose a new algorithm to transfer RL that achieves both learning speed improvement and jumpstart improvement (Section~\ref{sec:experiments}). However, we might not find an asymptotic improvement in performance if compared with the best and unbiased algorithm in the true setting. Rather, we aim to achieve a model estimate that allows us to plan accurately in the target MDP (Section~\ref{sec:bounds}). 

\textbf{Contributions.} We aim to answer the two questions:\\
1. \emph{How can we accurately construct a model using a set of source models for an RL agent deployed in the wild?} \\
2. \emph{Does the constructed model allows efficient planning and yield improvements over learning from scratch?}

In this paper, we address these questions as follows:

1. \textit{A Taxonomy of MTRL:} First, we formulate the problem with the Markov Decision Processes (MDPs) setting of RL. We further provide a taxonomy of the problem depending on a discrete or continuous set of source models, and whether the target model is realisable by the source models (Section~\ref{sec:trl}).

2. \textit{Algorithm Design with MLE:} Following that, we design a two-stage algorithm MLEMTRL to plan in an unknown target MDP (Section~\ref{sec:algorithms}). In the first stage, MLEMTRL uses a Maximum Likelihood Estimation (MLE) approach to estimate the target MDP using the source MDPs. In the second stage, MLEMTRL uses the estimated model to perform model-based planning. We instantiate MLEMTRL for discrete state-action (tabular) MDPs and Linear Quadratic Regulators (LQRs). We also derive a generic bound on the goodness of the policy computed using MLEMTRL (Section~\ref{sec:bounds}).

3. \textit{Performance Analysis:} In Section~\ref{sec:experiments}, we empirically verify whether MLEMTRL improves the performance for unknown tabular MDPs and LQRs than learning from scratch. MLEMTRL exhibits learning speed improvement for tabular MDPs and LQRs. For LQRs, it incurs learning speed improvement and asymptotic improvement. We also observe that the more similar the target and source models are, the better the performance of MLEMTRL, as indicated by the theoretical analysis. 

Before elaborating on the contributions, we posit this work in the existing literature (Section~\ref{sec:related}) and discuss the background knowledge of MDPs and MLEs (Section~\ref{sec:background}).

\vspace*{-1em}\section{Related Work}\label{sec:related}
Our work on Model Transfer Reinforcement Learning is situated in the field of Transfer RL (TRL) and also is closely related to the multi-task RL and Bayesian multi-task RL literature. In this section, we elaborate on these connections.

TRL is widely studied in Deep Reinforcement Learning. \citet{zhu2020transfer} introduces different ways of transferring knowledge, such as \emph{policy transfer}, where the set of source MDPs $\mathcal{M}_s$ has a set of expert policies associated with them. The expert policies are used together with a new policy for the novel task by transferring knowledge from each policy. \citet{rusu2015policy} uses this approach, where a student learner is combined with a set of teacher networks to guide learning in multi-task RL. \citet{parisotto2015actor} develops an actor-critic structure to learn ways to transfer its knowledge to new domains.   \citet{arnekvist2019vpe} invokes generalisation across Q-functions by learning a master policy. Here,\textit{ we focus on model transfer instead of policy}.

Another seminal work in TRL, by~\citet{taylor2009transfer} distinguishes between \emph{multi-task learning} and \emph{transfer learning}. Multi-task learning deals with problems where the agent aims to learn from a distribution over scenarios, whereas transfer learning makes no specific assumptions about the source and target tasks. Thus, in transfer learning, the tasks could involve different state and action spaces, and different transition dynamics. Specifically, we focus on \textbf{model-transfer}~\citep{atkeson1997comparison} approach to TRL, where the state-action spaces and also dynamics can be different. \citet{atkeson1997comparison} performs model transfer for a target task with an identical transition model. Thus, the main consideration is to transfer knowledge to tasks with the same dynamics but varying rewards. \citet{laroche2017transfer} assumes a context similar to that of~\citet{atkeson1997comparison}, where the model dynamics are identical across environments. In our work, we rather assume that the reward function is the same, but the transition models are different. We believe this is an interesting question as the harder part of learning an MDP is learning the transition model. 
These works explicate a deep connection between the fields of \emph{multi-task learning} and \emph{TRL}. In general, TRL can be viewed as an extension of multi-task RL, where multiple tasks can either be learned simultaneously or have been learned \textit{a priori}. This flexibility allows us to learn even in settings where the state-actions and transition dynamics are different among tasks. \citep{rommel2017aircraft} describes a multi-task Maximum Likelihood Estimation procedure for optimal control of an aircraft. They identify a mixture of Gaussians, where the mixture is over each of the tasks. Here, we adopt an MLE approach to TRL in order to optimise performance for the target MDP (or a target task) than restricting to a mixture of Gaussians.

The Bayesian approach to multi-task RL~\citep{wilson2007multi, lazaric2010bayesian} tackles the problem of learning jointly how to act in multiple environments. \citet{lazaric2010bayesian} handles the \emph{open-world assumption}, i.e. the number of tasks is unknown. This allows them to transfer knowledge from existing tasks to a novel task, using value function transfer. However, this is significantly different from our setting, as we are considering model-based transfer. Further, \textit{we adopt an MLE-based framework in lieu of the full Bayesian procedure described in their work}.
In Bayesian RL, \citet{tamar2022regularization} also investigates a learning technique to generalise over multiple problem instances. By sampling a large number of instances, the method is expected to learn how to generalise from the existing tasks to a novel task. We do not assume access to such a prior or posterior distributions to sample from.

There is another related line of work, namely multi-agent transfer RL~\citep{da2019survey}. For example, \citet{liang2023federated} develops a TRL framework for autonomous driving using federated learning. They accomplish this by aggregating knowledge for independent agents. This setting is different from general transfer learning but could be incorporated if the source tasks are learned simultaneously with the target task. This requires cooperation among agents and is out of the scope of this paper.

\vspace*{-1em}\section{Background}\label{sec:background}
Here, we introduce the important concepts on which this work is based upon. Firstly, we introduce the way we model the dynamics of the tasks. Secondly, we describe the Maximum Likelihood Estimation framework used in this work.

\noindent\textbf{Markov Decision Process (MDP).}\label{sec:mdp}
We study sequential decision-making problems that can be represented as MDPs~\citep{puterman2014markov}. An MDP $\mdp = (\mathcal{S}, \mathcal{A}, \mathcal{R}, \mathcal{T}, \gamma)$ consists of a discrete or continuous state space denoted by $\mathcal{S}$, a discrete or continuous action-space $\mathcal{A}$, a reward function $\mathcal{R} : \mathcal{S} \times \mathcal{A} \rightarrow \mathbb{R}$ which determines the quality of taking action $a$ in state $s$, and a transition function $\mathcal{T} : \mathcal{S} \times \mathcal{A} \rightarrow \Delta(\mathcal{S})$ inducing a probability distribution over the successor states $s'$ given a current state $s$ and action $a$. Finally, in the infinite-horizon formulation, a discount factor $\gamma \in [0, 1)$ is assigned. The overarching objective for the agent is to compute a decision-making policy $\pol : \mathcal{S} \rightarrow \Delta(\mathcal{A})$ that maximises the expected sum of future discounted rewards up until the horizon $T$: $V_\mdp^{\pol}(s) = \mathbb{E}\Big[\sum_{t=0}^T \gamma^t \mathcal{R}(s_t, a_t)\Big]$. $V_\mdp^{\pol}(s)$ is called the value function of policy $\pol$ for MDP $\mdp$. Let $V_\mdp^{*} = V_\mdp^{\pol^{*}}$ denote the optimal value function. The technique used to obtain the the optimal policy $\pol^{*} = {\sup}_{\pol} \, V_\mdp^{\pol}$ depends on the MDP class. The MDPs with discrete state-action spaces are referred to as tabular MDPs. In this paper, we also study a class of MDPs with continuous state-action spaces, namely Linear Quadratic Regulators (LQRs)~\citep{kalman1960new}. In tabular MDPs, we employ \textsc{ValueIteration}~\citep{puterman2014markov} for model-based planning, whereas in the LQR setting, we use \textsc{RiccatiIteration}~\citep{willems1971least}. 

The standard metric used to measure the performance of a policy $\pol$~\citep{bell1982regret} for an MDP $\mdp$ is \textit{regret} $R(\mdp, \pol)$. Regret is the difference between the optimal value function and the value function of $\pol$. In this work, we extend the definition of regret for MTRL, where the optimality is taken for a policy class in the target MDP.

\noindent\textbf{Maximum Likelihood Estimation (MLE).}
One of the most popular methods of constructing point estimators is the \emph{Maximum Likelihood Estimation}~\citep{casella2021statistical} framework. Given a density function $f(x \, | \, \theta_1, \hdots, \theta_n)$ and associated i.i.d. data $X_1, \hdots, X_t$, the goal of the MLE scheme is to maximise, $\ell(\theta \, | \, x) \triangleq \ell(\theta_1, \hdots, \theta_n \, | \, x_1, \hdots, x_t) \triangleq \log \prod_{i=1}^t f(x_i \, | \, \theta_1, \hdots, \theta_n)$.
$\ell(\cdot)$ is called the log-likelihood function. The set of parameters $\theta$ maximising $\ell(\theta \, | \, x)$ is called the \emph{maximum likelihood estimator} of $\theta$ given the data $X_1, \hdots, X_t$. MLE has many desirable properties that we leverage in this work. For example, the MLE satisfies \emph{consistency}, i.e. under certain conditions, it achieves optimality even for \emph{constrained} MLE. An estimator being consistent means that if the data $X_1, \hdots, X_t$ is generated by $f(\cdot \, | \, \theta)$ and as $t\rightarrow \infty$, the estimate almost surely converges to the true parameter $\theta$. \citep{kiefer1956consistency} shows that MLE admits the consistency property given the following assumptions hold. The model is \emph{identifiable}, i.e. the densities at two parameter values must be different unless the two parameter values are identical. Further, the parameter space is \emph{compact} and \emph{continuous}. Finally, if the log-density is \emph{dominated}, one can establish that MLE converges to the true parameter almost surely~\citep{newey1987asymmetric}.
For problems where the likelihood is unbounded, flat, or otherwise unstable, one may introduce a penalty term in the objective function. This approach is called \emph{penalised maximum likelihood estimation}~\citep{ciuperca2003penalized, ouhamma2022bilinear}. As we in our work are mixing over known parameters, we do not need to add regularisation to our objective to guarantee convergence.

In this work, we iteratively collect data and compute new point estimates of the parameters and use them in our decision-making procedure. In order to carry out MLE, a likelihood function has to be chosen. In this work, we investigate two such likelihood functions in Section~\ref{sec:algorithms}, one for each respective model class. 

\vspace*{-1em}\section{A Taxonomy of Model Transfer RL}\label{sec:trl}
Now, we formally define the Model Transfer RL problem and derive a taxonomy of settings encountered in MTRL.

\vspace*{-1em}\subsection{MTRL: Problem Formulation}\vspace*{-.5em} 
Let us assume that we have access to a set of source MDPs $\mathcal{M}_s \triangleq \{\mu_i\}_{i=1}^m$. The individual MDPs can belong to a finite or infinite but compact set depending on the setting. For example, for tabular MDPs with finite state-actions, this is always a finite set. Whereas for MDPs with continuous state-actions, the transitions can be parameterised by real-valued vectors/matrices, corresponding to an infinite but compact set.
Given access to $\mathcal{M}_s$, we want to find an optimal policy for an unknown target MDP $\mdp^*$ that we encounter while deploying RL in the wild.
At each step $t$, we use $\mathcal{M}_s$ and the data observed from the target MDP $D_{t-1} \triangleq \{ s_0, a_0, s_1, \ldots, s_{t-1}, a_{t-1}, s_t\}$ to construct an estimate of $\mdp^*$, say $\hat{\mdp}^t$. Now, we use $\hat{\mdp}^t$ to run a model-based planner, such as \textsc{ValueIteration} or \textsc{RiccatiIteration}, that leads to a policy $\pol^t$. After completing this planning step, we interact with the target MDP using $\pol_t$ that yields an action $a_t$, and leads to observing $s_{t+1}, r_{t+1}$. We update the dataset with these observations: $D_t \triangleq D_{t-1} \cup \{a_{t}, s_t\}$. Here, we assume that all the source and target MDPs share the same reward function $\mathcal{R}$. We do not put any restrictions on the state-action space of target and source MDPs. 

Our goal is to compute a policy ${\pol}^t$ that performs as close as possible with respect to the optimal policy $\pol^*$ for the target MDP as the number of interactions with the target MDP $t\rightarrow \infty$.
This allows us to define a notion of regret for MTRL: $R(\mdp^{*}, \pol_t) \triangleq V_{\mdp^{*}}^{*}-V_{\mdp^{*}}^{\pol_t}$. Here, $\pol_t$ is a function of the source models $\mathcal{M}_s$, the data collected from target MDP $D_t$, and the underlying MTRL algorithm.
The goal of an MTRL algorithm is to minimise $R(\mdp^{*}, \pol_t)$.
For the parametric policies $\pol_{\theta}$ with $\theta \in \Theta \subset \mathbb{R}^d$, we can specialise the regret further for this parametric family: $R(\mdp^{*}, \pol_{\theta_t}) = V_{\mdp^{*}}^{\pol{\theta^*}}-V_{\mdp^{*}}^{\pol_{\theta_t}}$. For example, for LQRs, we by default work with linear policies. We use this notion of regret in our theoretical and experimental analysis.


\vspace*{-1em}\subsection{Three Classes of MTRL Problems}\vspace*{-.5em}
We begin by illustrating MTRL using Figure~\ref{fig:trl}. In the figure, the source MDPs $\mathcal{M}_s$ are depicted in red. This green area is the convex hull spanned by the source models $\mathcal{C}(\mathcal{M}_s)$. The target MDP $\mdp^{*}$, the best representative within the convex hull of the source models $\mdp$, and the estimated MDP $\hat{\mdp}$ are shown in blue, yellow, and purple, respectively. 
If the target model is \emph{inside} the convex hull, we call it a \textbf{realisable} setting. whereas If the target model is outside (as in Figure~\ref{fig:trl}), then we have a \textbf{non-realisable} setting.

Figure~\ref{fig:trl} also shows that the total deviation of the estimated model from the target model depends on two sources of errors: (i) realisability, i.e. how far is the target MDP $\mu^*$ from the convex hull of the source models $\mathcal{C}(\mathcal{M}_s)$ available to us, and (ii) estimation, i.e. how close is the estimated MDP $\hat{\mdp}$ to the best possible representation $\mdp$ of the target MDP. In the realisable case, the realisability gap can be reduced to zero, but not otherwise. This approach allows us to decouple the effect of the expressibility of the source models and the goodness of the estimator.
\setlength{\textfloatsep}{4pt}%
\begin{figure}
    \centering
    \includegraphics[width=0.35\paperwidth]{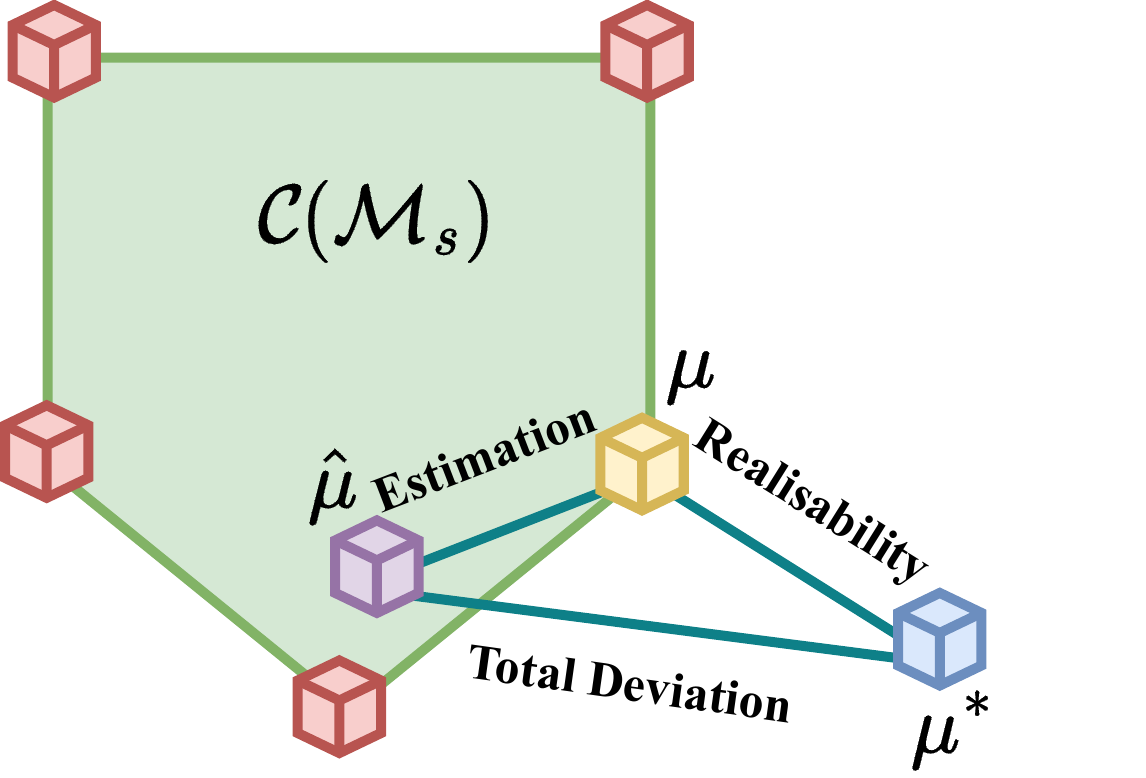}
    \caption{An illustration of the MTRL setting. The source models $\mathcal{M}_s$ are the red boxes. The green area is the convex hull $\mathcal{C}(\mathcal{M}_s)$ spanned by the source models. The target MDP $\mdp^{*}$ is displayed in blue, and the best proxy model is contained in the convex hull $\mdp$ in yellow. Finally, the estimator of the best proxy model $\hat{\mdp}$ is shown in purple.}
    \label{fig:trl}
\end{figure}

Now, we further elaborate on these three classes and the corresponding implications of performing MLE.

\textbf{I. Finite and Realisable Plausible Models.}\label{subsec:finite}
If the true model $\mdp^{*}$ is one of the target models, i.e. $\hat{\mdp} \in \mathcal{M}_s$, we have to identify the target MDP from a finite set of plausible MDPs. Thus, the corresponding MLE involves a finite set of parameters, i.e. the parameters of the source MDPs $\mathcal{M}_s$. We compute the MLE $\hat{\mdp}$ by solving the optimisation problem:
\begin{equation}
    \hat{\mdp} \in \underset{\mdp' \in \mathcal{M}_s}{\arg\max} \, \log \mathbb{P}(D_t \, | \, \mdp'), D_t \sim \mdp^{*}.
\end{equation}
This method may serve as a reasonable heuristic for the TRL problem, where the target MDP is the same as or reasonably close to one of the source MDPs. However, this method will potentially be sub-optimal if the target MDP is too different from the source MDPs. 
Even if $\mdp^{*}$ lies within the convex hull of the source MDPs (the green area in Figure~\ref{fig:trl}), this setting restricts the selection of a model to one of the red boxes. Thus, this setting fails to leverage the expressiveness of the source models as MLE allows us to accurately estimate models which are also in $\mathcal{C}(\mathcal{M}_s)$. Thus, we focus on the two settings described below.

\textbf{II. Infinite and Realisable Plausible Models.}\label{subsec:realisable}
In this setting, the target MDP $\mdp^{*}$ is in the convex hull $\mdp^{*} \in \mathcal{C}(\mathcal{M}_s)$ of the source MDPs. Thus, with respect to Class I, we extend the parameter space considered in MLE to an infinite but compact parameter set. 

Let us define the convex hull as $\mathcal{C}(\mathcal{M}_s) \triangleq \{\mdp_1 w_1 + \hdots + \mdp_m w_m \, | \, \mdp_i \in \mathcal{M}_s, w_i \geq 0, i=1, \hdots, m, \sum_{i=1}^m w_i = 1\}$. Then, the corresponding MLE problem with the corresponding likelihood function is given by:
\begin{equation}\label{eq:realisable}
    \hat{\mdp} \in \underset{\mdp' \in \mathcal{C}(\mathcal{M}_s)}{\arg\max} \, \log \mathbb{P}(D_t \, | \, \mdp'), D_t \sim \mdp^{*}.
\end{equation}
Since $\mathcal{C}(\mathcal{M}_s)$ induces a compact subset of model parameters $\mathcal{M}' \subset \mathcal{M}$, Equation~\eqref{eq:realisable} leads to a \emph{constrained maximum likelihood estimation problem}~\citep{aitchison1958maximum}. It implies that, if the parameter corresponding to the target MDP is in $\mathcal{M}'$, it can be correctly identified. In the case where the optimum lies inside, we can use constrained MLE to accurately identify the true parameters given enough experience from $\mdp^{*}$. This approach allows us to leverage the expressibility of the source models completely. However, $\mdp^{*}$ might lie outside or on the boundary. Either of these cases may pose problems for the optimiser.

\textbf{III. Infinite and Non-realisable Plausible Models.}\label{subsec:non-realisable}
This class is similar to Class II with the important difference that the true parameter $\mdp^*$ is outside the convex hull of source MDPs $\mathcal{C}(\mathcal{M}_s)$, and thus, the corresponding parameter is not in the induced parameter subset $\mathcal{M}'$. This key difference means the true parameters cannot be correctly identified. 
Instead, the objective is to identify the best proxy model $\mdp \in \mathcal{M}'$. The performance loss for using $\mdp$ instead of $\mdp^{*}$ is intimately related to the model dissimilarity $||\mdp^{*}-\mdp||_1$. This allows us to describe the limitation of expressivity of the source models by defining the \textit{realisability gap}: $\epsilon_{\mathrm{Realise}} \triangleq \min_{\mdp \in \mathcal{C}(\mathcal{M}_s)}\|\mdp^{*}-\mdp\|_1 $. The realisability gap becomes important while dealing with continuous state-action MDPs with parameterised dynamics, such as LQRs. 

\vspace*{-.8em}\section{MLEMTRL: MTRL with Maximum Likelihood Model Transfer}\label{sec:algorithms}
Now, we present the proposed algorithm, MLEMTRL. The algorithm consists of two stages, a \emph{model estimation} stage, and a \emph{planning} stage. After having obtained a plan, then the agent will carry out its decision-making in the environment to acquire new experiences. We sketch an overview of MLEMTRL in Algorithm~\ref{alg:mlemtrl}. For completeness, we also provide an extension to MLEMTRL called Meta-MLEMLTRL. This extension combines the MLEMTRL estimated model with the empirical model of the target task. This allows us to identify the true model even in the non-realisable setting. For brevity of space, further details are deferred to Appendix~\ref{sec:meta_mlemtrl}.

\begin{algorithm}[t!]
\caption{Maximum Likelihood Estimation for Model-based Transfer Reinforcement Learning (MLEMTRL)}\label{alg:mlemtrl}
\begin{algorithmic}[1]
\STATE \textbf{Input:} weights $\bm{w}^0$, $m$ source MDPs $\mathcal{M}_s$, data $D_0$, discount factor $\gamma$, iterations $T$.
\FOR{$t=0, \hdots, T$}
\STATE\textsc{// Stage 1: Model Estimation //}
\STATE $\bm{w}^{t+1}\leftarrow  \textsc{Optimiser}(\log\mathbb{P}(D_t \, | \, \Sigma_{i=1}^m w_i \mdp_i), \bm{w}^t)$\label{lin:optimiser}
\STATE Estimate the MDP: $\mdp^{t+1} = \sum_{i=1}^m w_i \mdp_i$
\STATE\textsc{// Stage 2: Model-based Planning //}
\STATE Compute the policy: $\pol^{t+1} \in \underset{\pol}{\arg\max} \, V_{\mdp^{t+1}}^\pol$
\STATE\textsc{// Control //}
\STATE Observe $s_{t+1}, r_{t+1} \sim \mdp^{*}(s_t, a_t), a_t\sim \pol^{t+1}(s_t)$
\STATE Update the dataset $D_{t+1} = D_t \cup \{s_t, a_t, s_{t+1}, r_{t+1}\}$
\ENDFOR
\STATE \textbf{return} An estimated MDP model $\mdp^T$ and a policy $\pol^T$
\end{algorithmic}
\end{algorithm}

\noindent\underline{\textbf{Stage 1: Model Estimation:}}
The first stage of the proposed algorithm is \emph{model estimation}. During this procedure, the likelihood of the data needs to be computed for the appropriate MDP class. In the tabular setting, we use a product of multinomial likelihoods, where the data likelihood is over the distribution of successor states $s'$ for a given state-action pair $(s, a)$. In the LQR setting, we use a linear-Gaussian likelihood, which is also expressed as a product over data observed from target MDP. 

\noindent\textbf{Likelihood for Tabular MDPs.}
The log-likelihood that we attempt to maximise in tabular MDPs is a product over $|\mathcal{S}|\times|\mathcal{A}|$ of pairs of multinomials, where $p_i$ is the probability of event $i$, $n^{s,a}$ is the number of times the state-action pairs $(s, a)$ appear in the data $D_t$, and $x_i^{s,a}$ is the number of times the state-action pair $(s, a, s_i)$ occurs in the data. That is, $\sum_{i=1}^{|\mathcal{S}|} x_i^{s,a} = n^{s,a}$. Specifically,
\begin{equation}
    \log \mathbb{P}(D_t \, | \, \bm{p}) = \log \Bigg(\prod_{s, a} n^{s,a}!\prod_{i=1}^{|\mathcal{S}|}\frac{p_i^{x^{s,a}_i}}{x^{s,a}_i!} \Bigg)
\end{equation}

\noindent\textbf{Likelihood for Linear-Gaussian MDPs.}
For continuous state-action MDPs, we use a linear-Gaussian likelihood. In this context, let $d_s$ be the dimensionality of the state-space, $\bm{s} \in \mathbb{R}^{d_s}$ and $d_a$ be the dimensionality of the action-space. Then, the mean function $\mathbf{M}$ is a $\mathbb{R}^{d_s}\times\mathbb{R}^{d_a+d_s}$ matrix. The mean visitation count to the successor state $\bm{s}_t'$ when an action $\bm{a}_t$ is taken at state $\bm{s}_t$ is given by $\mathbf{M}(\bm{a}_t, \bm{s}_t)$. We denote the corresponding covariance matrix of size $\mathbb{R}^{d_s}\times\mathbb{R}^{d_s}$ by $\mathbf{S}$. Thus, we express the log-likelihood by
\begin{align*}
\begin{aligned}
    \log \mathbb{P}(D_t \, | \, \mathbf{M}, \mathbf{S}) &= \log \prod_{i=1}^t \frac{\exp\Big(-\frac{1}{2}\bm{v}^\top\mathbf{S}^{-1}\bm{v}\Big)}{(2\pi)^{d_s/2}|\mathbf{S}|^{1/2}},\\
    &\textrm{where} \, \bm{s}_i'-\mathbf{M}(\bm{a}_i, \bm{s}_i) = \bm{v}.
\end{aligned}
\end{align*}

\noindent\textbf{Model Estimation as a Mixture of Models.}
As the optimisation problem involves weighing multiple source models together, we add a weight vector $\bm{w} \in [0, 1]^{m}$ with the usual property that $\bm{w}$ sum to $1$. This addition results in another outer product over the likelihoods shown above. Henceforth, $\mdp$ will refer to either the parameters associated with the product-Multinomial likelihood or the linear-Gaussian likelihood, depending on the model class.
\begin{equation}\label{eq:constrained_likelihood}
\begin{aligned}
\underset{\bm{w}}{\min} \quad &\log \mathbb{P}(D_t \, | \, \Sigma_{i=1}^m w_i \mdp_i), D_t \sim \mdp^{*}, \mdp_i\in\mathcal{M}_s,\\
\textrm{s.t.} \quad & \sum_{i=1}^m w_i = 1, w_i\geq0.\\
\end{aligned}
\end{equation}

Because of the constraint on $\bm{w}$, this is a constrained nonlinear optimisation problem. We can use any optimiser algorithm, denoted by \textsc{Optimiser}, for this purpose.

\noindent{\textsc{Optimiser}.} In our implementations, we use Sequential Least-Squares Quadratic Programming (SLSQP)~\citep{kraft1988software} as the \textsc{Optimiser}. SLSQP is a quasi-Newton method solving a quadratic programming subproblem for the Lagrangian of the objective function and the constraints.

Specifically, in Line~\ref{lin:optimiser} of Algorithm~\ref{alg:mlemtrl}, we compute the next weight vector $\bm{w}^{t+1}$ by solving the optimisation problem in Eq.~\eqref{eq:constrained_likelihood}. Let $f(\bm{w}) = \log \mathbb{P}(D_t \, | \, \Sigma_{i=1}^m w_i \mdp_i)$. Further, let $\lambda=\{\lambda_1,\hdots,\lambda_m\}$ and $\kappa$ be Lagrange multipliers. We then define the Lagrangian $\mathcal{L},$
\begin{equation}
    \mathcal{L}(\bm{w}, \lambda,\kappa) = f(\bm{w})-\lambda^\top\bm{w}-\kappa(1-\mathbf{1}^\top\bm{w}).
\end{equation}


Here, $\bm{w}^k$ is the $k$-th iterate. Finally, taking the local approximation of Eq.~\eqref{eq:constrained_likelihood}, we define the optimisation problem as:
\begin{equation}
    \begin{aligned}
        &\underset{\bm{d}}{\min} \, \frac{1}{2}\bm{d}^\top \nabla^2 \mathcal{L}(\bm{w},\lambda,\kappa)\bm{d}+\nabla f(\bm{w}^k)\bm{d}+f(\bm{w}^k)\\
        &\textrm{s.t.} \, \bm{d} + \bm{w}^k \geq 0, \mathbf{1}^\top\bm{w}^k = 1.
    \end{aligned}
\end{equation}
This minimisation problem yields the search direction $\bm{d}_k$ for the $k$-th iteration. Applying this iteratively and using the construction above ensures that the constraints posed in Eq.~\eqref{eq:constrained_likelihood} are adhered to at every step of MLEMTRL. At convergence, the $k$-th iterate, $\bm{w}^k$ is considered as the next $\bm{w}^{t+1}$ in Line~\ref{lin:optimiser} of Algorithm~\ref{alg:mlemtrl}.

\noindent\underline{\textbf{Stage 2: Model-based Planning:}}
When an appropriate model $\mdp^t$ has been identified at time step $t$, the next stage of the algorithm involves model-based planning in the estimated MDP. We describe two model-based planning techniques, \textsc{ValueIteration} and \textsc{RiccatiIteration} for tabular MDPs and LQRs, respectively.

\noindent\textsc{ValueIteration.}
Given the model, $\mdp^t$ and the associated reward function $\mathcal{R}$, the optimal value function of $\mdp^t$ can be computed iteratively as~\citep{sutton2018reinforcement}:
\begin{equation}\label{eq:value_iteration}
    V_{\mdp^t}^{*}(s) = \underset{a}{\max} \, \sum_{s'}\mathcal{T}_{s,s'}^a\Big(\mathcal{R}(s,a)+\gamma V_{\mdp^t}^{*}(s')\Big).
\end{equation}
The fixed-point solution to Eq.\ref{eq:value_iteration} is the optimal value function. When the optimal value function has been obtained, one can simply select the action maximising the action-value function. Let $\pol^{t+1}$ be the policy selecting the maximising action for every state, then $\pol^{t+1}$ is the policy the model-based planner will use at time step $t+1$.

\noindent\textsc{RiccatiIteration.}
A LQR-based control system, and thus, the corresponding MDP, is defined by four system matrices~\citep{kalman1960new}: $\mathbf{A}, \mathbf{B}, \mathbf{Q},\mathbf{R}$. The matrices $\mathbf{A, B}$ are associated with the transition model $\bm{s}_{t+1}-\bm{s}_t = \mathbf{A}\bm{s}_t + \mathbf{B}\bm{a}_t$. The matrices $\mathbf{Q, R}$ dictate the quadratic cost (or reward) of a policy $\pol$ under an MDP $\mdp$ is
\begin{equation*}
    V_{\mdp}^\pol = \sum_{t=0}^T \bm{s}_t^\top \mathbf{Q}\bm{s}_t + \bm{a}_t^\top \mathbf{R}\bm{a}_t.
\end{equation*}
Optimal policy is identified following~\citet{willems1971least} that states $\bm{a}_t = -\mathbf{K}\bm{s}_t$ at time $t$, where $\mathbf{K}$ is computed using $\mathbf{A}, \mathbf{B}, \mathbf{Q},\mathbf{R}$. We refer to Appendix~\ref{sec:riccati} for details.

\section{Theoretical Analysis}\label{sec:bounds}

In this section, we further justify the use of our framework by deriving worst-case performance degradation bounds relative to the optimal controller. The performance loss is shown to be related to the realisability of $\mdp^{*}$ under $\mathcal{C}(\mathcal{M}_s)$. In Figure~\ref{fig:trl}, we visualise the model dissimilarities, where $||\mdp-\hat{\mdp}||_1$ is the model estimation error, $||\mdp^{*}-\mdp||_1$ is the realisability gap and $||\mdp^{*}-\hat{\mdp}||_1$ the total deviation of the estimated model. Note that by the norm on MDP, we always refer to the $L_1$ norm over transition matrices.

\begin{theorem}[Performance Gap for Non-Realisable Models]\label{lemma:non-realisable}
Let $\mdp^* = (\mathcal{S}, \mathcal{A}, \mathcal{R}, \mathcal{T}^*, \gamma)$ be the true underlying MDP. Further, let $\mdp = (\mathcal{S}, \mathcal{A}, \mathcal{R}, \mathcal{T}, \gamma)$ be the maximum likelihood $\mdp \in \arg\min\nolimits_{\mdp' \in \mathcal{C}(\mathcal{M}_s)}~\mathbb{P}(D_\infty \, | \, \mdp'), D_\infty \sim \mdp^{*}$ and $\hat{\mdp} = (\mathcal{S}, \mathcal{A}, \mathcal{R}, \hat{\mathcal{T}}, \gamma)$ be a maximum likelihood estimator of $\mdp$. In addition, let $\pol^*, \pol, \hat{\pol}$ be the optimal policies for the respective MDPs. Then, if $\mathcal{R}$ is a bounded reward function $\forall_{(s, a)} \, r(s, a) \in [0, 1]$ and with $\epsilon_{\mathrm{Estim}}$ being the estimation error and $\epsilon_{\mathrm{Realise}} \triangleq \min_{\mdp \in \mathcal{C}(\mathcal{M}_s)}\|\mdp^{*}-\mdp\|_1 $ the realisability gap. Then, the performance gap is given by,
\begin{equation}
    ||V_{\mdp^*}^*-V_{\mdp^*}^{\hat{\pol}}||_\infty \leq \frac{3(\epsilon_{\mathrm{Estim}}+\epsilon_{\mathrm{Realise}})}{(1-\gamma)^2}.
\end{equation}

\end{theorem}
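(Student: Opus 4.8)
The plan is to decompose the performance gap $\|V_{\mdp^*}^* - V_{\mdp^*}^{\hat\pol}\|_\infty$ along the chain of three MDPs $\mdp^* \to \mdp \to \hat\mdp$ and to control each piece using a simulation-lemma-type argument. The central tool is the standard \emph{simulation lemma}: if two MDPs $\mdp_1,\mdp_2$ share the same bounded reward $\mathcal{R}\in[0,1]$ and differ only in their transition kernels, then for any policy $\pol$, $\|V_{\mdp_1}^\pol - V_{\mdp_2}^\pol\|_\infty \le \frac{\gamma}{2(1-\gamma)^2}\sup_{s,a}\|\mathcal{T}_1(\cdot|s,a)-\mathcal{T}_2(\cdot|s,a)\|_1$, or in the slightly looser form $\le \frac{1}{(1-\gamma)^2}\|\mathcal{T}_1-\mathcal{T}_2\|_1$ using the $L_1$ norm convention on transition matrices declared just before the theorem. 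I would first state this as a lemma (citing or reproving it via the telescoping identity $V_{\mdp_1}^\pol - V_{\mdp_2}^\pol = (I-\gamma P_1^\pol)^{-1}\gamma(P_1^\pol - P_2^\pol)V_{\mdp_2}^\pol$ together with $\|V\|_\infty \le 1/(1-\gamma)$ and $\|(I-\gamma P)^{-1}\|_\infty \le 1/(1-\gamma)$).

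Next I would insert the optimal policies and split the error into three terms. Write
\begin{align*}
V_{\mdp^*}^* - V_{\mdp^*}^{\hat\pol}
&= \big(V_{\mdp^*}^{\pol^*} - V_{\mdp}^{\pol^*}\big)
 + \big(V_{\mdp}^{\pol^*} - V_{\mdp}^{\hat\pol}\big)
 + \big(V_{\mdp}^{\hat\pol} - V_{\mdp^*}^{\hat\pol}\big).
\end{align*}
The first and third terms are bounded by the simulation lemma applied to the pair $(\mdp^*,\mdp)$, each contributing at most $\epsilon_{\mathrm{Realise}}/(1-\gamma)^2$, since $\|\mathcal{T}^*-\mathcal{T}\|_1 = \epsilon_{\mathrm{Realise}}$ by definition of $\mdp$ as the $L_1$-closest element of $\mathcal{C}(\mathcal{M}_s)$ (here I use that the population MLE over the convex hull coincides with the $L_1$/KL projection in the limit $D_\infty$). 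The middle term $V_{\mdp}^{\pol^*} - V_{\mdp}^{\hat\pol}$ is the loss, \emph{within the single MDP $\mdp$}, of using $\hat\pol$ instead of $\pol^*$; since $\hat\pol$ is optimal for $\hat\mdp$ and $\pol$ is optimal for $\mdp$, a standard argument bounds this by $2\|\mathcal{T}-\hat{\mathcal{T}}\|_1/(1-\gamma)^2$ — indeed $V_{\mdp}^{\pol^*} - V_{\mdp}^{\hat\pol} \le V_{\mdp}^{\pol} - V_{\mdp}^{\hat\pol} \le (V_{\mdp}^{\pol} - V_{\hat\mdp}^{\hat\pol}) + (V_{\hat\mdp}^{\hat\pol} - V_{\mdp}^{\hat\pol})$, and each of these is controlled by the simulation/optimality comparison across $\mdp$ and $\hat\mdp$, giving the factor $2\epsilon_{\mathrm{Estim}}$ if we set $\epsilon_{\mathrm{Estim}} \triangleq \|\mathcal{T}-\hat{\mathcal{T}}\|_1$. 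Collecting the three contributions gives $\le \frac{2\epsilon_{\mathrm{Realise}} + 2\epsilon_{\mathrm{Estim}}}{(1-\gamma)^2}$; tightening constants or re-routing the middle term (e.g. using that $\pol^*$ and $\pol$ need not be compared directly, or absorbing a stray $\gamma$) yields the stated bound with constant $3$, which is a clean slightly-loose envelope of these pieces.

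The main obstacle I anticipate is the middle term: one must be careful that $\hat\pol$ and $\pol$ are optimal for \emph{different} MDPs, so controlling $V_{\mdp}^{\pol} - V_{\mdp}^{\hat\pol}$ requires going through $\hat\mdp$ and using optimality of $\hat\pol$ there (a double application of the simulation lemma plus the inequality $V_{\hat\mdp}^{\hat\pol} \ge V_{\hat\mdp}^{\pol}$), which is where the factor $2$ — and after bookkeeping the $3$ — originates. A secondary subtlety is justifying that the population-level constrained MLE $\mdp$ really attains $\|\mdp^*-\mdp\|_1 = \epsilon_{\mathrm{Realise}}$: strictly the MLE is a KL/information projection onto $\mathcal{C}(\mathcal{M}_s)$, and one needs Pinsker's inequality (or the fact that the realisability gap is defined as the $L_1$ projection and the MLE limit agrees with it under identifiability/compactness, as discussed in Section~\ref{sec:background}) to pass to the $L_1$ quantity used in the simulation lemma. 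Everything else is routine norm bookkeeping with $\|(I-\gamma P^\pol)^{-1}\|_\infty \le (1-\gamma)^{-1}$ and $\|V\|_\infty \le (1-\gamma)^{-1}$.
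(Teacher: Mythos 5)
Your proof is correct, and it rests on the same two ingredients as the paper's argument --- a simulation lemma bounding $\|V_{\mdp_1}^\pol - V_{\mdp_2}^\pol\|_\infty$ by $\|\mathcal{T}_1-\mathcal{T}_2\|_1/(1-\gamma)^2$ for a fixed policy, and an optimal-vs-optimal comparison across two close MDPs --- but it organises them differently. The paper applies the triangle inequality at the level of \emph{transition kernels} to conclude that $\hat{\mdp}$ is an $(\epsilon_{\mathrm{Estim}}+\epsilon_{\mathrm{Realise}})$-homogeneous partition of $\mdp^*$ (in the sense of Even-Dar and Mansour, 2003), and then splits the gap into only two terms pivoting on $V_{\hat{\mdp}}^{\hat{\pol}}$: the optimal-vs-optimal term $\|V_{\mdp^*}^{*}-V_{\hat{\mdp}}^{\hat{\pol}}\|_\infty \leq 2(\epsilon_{\mathrm{Estim}}+\epsilon_{\mathrm{Realise}})/(1-\gamma)^2$ (their Lemma 4) plus the fixed-policy term $\|V_{\hat{\mdp}}^{\hat{\pol}}-V_{\mdp^*}^{\hat{\pol}}\|_\infty \leq (\epsilon_{\mathrm{Estim}}+\epsilon_{\mathrm{Realise}})/(1-\gamma)^2$ (their Lemma 3), which sums to the constant $3$. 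You instead apply the triangle inequality at the level of \emph{value functions}, pivoting on the intermediate model $\mdp$; this keeps the two error sources separate, and after routing the middle term through $\hat{\mdp}$ and using $V_{\hat{\mdp}}^{\pol}\leq V_{\hat{\mdp}}^{\hat{\pol}}$ you obtain $2(\epsilon_{\mathrm{Estim}}+\epsilon_{\mathrm{Realise}})/(1-\gamma)^2$ --- a strictly tighter constant that of course implies the stated bound, so there is no need to ``loosen'' it back up to $3$. The subtlety you flag about the population MLE being a KL projection rather than an $L_1$ projection (so that $\|\mathcal{T}^*-\mathcal{T}\|_1$ need not literally equal $\epsilon_{\mathrm{Realise}}$ without a Pinsker-type step) is real, but the paper's own proof silently makes the same identification when it asserts that $\mdp$ is an $\epsilon_{\mathrm{Realise}}$-homogeneous partition of $\mdp^*$; it is a shared imprecision rather than a gap specific to your argument.
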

For the full proof, see Appendix~\ref{sec:proof_non_realisability}. This result is comparable to recent results such as~\citep{zhang2020multi} but here with an explicit decomposition into model estimation error and realisability gap terms.

\begin{remark}[Bound on $L_1$ Norm Difference in the Realisable Setting]\label{remark:l1_norm}
It is known~\citep{strehl2005theoretical, auer2008near, qian2020concentration} that in the realisable setting, it is possible to bound the model estimation error term $\epsilon_{\mathrm{Estim}}$ via the following argument. Let $\mdp^*$ be the true underlying MDP, and $\hat{\mdp}$ be an MLE estimate of $\mdp^*$, as defined in Theorem~\ref{lemma:non-realisable}. If $\mathcal{R}$ is a bounded reward function, i.e. $r(s, a) \in [0, 1], \forall{(s, a)}$, and $\epsilon_{\mathrm{Estim}}$ is upper bound on the $L_1$ norm between $\mathcal{T}^*$ and $\hat{\mathcal{T}}$. If $n^{s,a}$ be the number of times $(s,a)$ occur together, then with probability $1-SA\delta$,
\begin{equation*}
        ||\mathcal{T}^*-\hat{\mathcal{T}}||_1 \leq \epsilon_{\mathrm{Estim}} \leq \sum_{s\in\mathcal{S}}\sum_{a\in\mathcal{A}}\sqrt{\frac{2\log\big((2^S -2)/\delta)\big)}{n^{s,a}}} 
\end{equation*}
    From this, it can be said that the total $L_1$ norm then scales on the order of $\mathcal{O}(SA\sqrt{S+\log(1/\delta)}/\sqrt{T})$.
\end{remark}
This result is specific to tabular MDPs. In tabular MDPs, the maximum likelihood estimate coincides with the empirical mean model. Further details are in Appendix~\ref{sec:proof_remark}.

\begin{remark}[Performance Gap in the Realisable Setting]
A trivial worst-case bound for the realisable case (Section~\ref{subsec:realisable}) can be obtained by setting $\epsilon_{\mathrm{Realise}}=0$ because by definition of the realisable case $\mdp^{*} \in \mathcal{C}(\mathcal{M}_s)$.
\end{remark}

\section{Experiments}\label{sec:experiments}

To benchmark the performance of MLEMTRL, we compare ourselves to a posterior sampling method (\textbf{PSRL})~\citep{osband2013more}, equipped with a combination of product-Dirichlet and product-NormalInverseGamma priors for the tabular setting, and Bayesian Multivariate Regression prior~\citep{minka2000bayesian} for the continuous setting. In PSRL, at every round, a new model is sampled from the prior, and it learns in the target MDP from scratch. Finally, for model-based planning, we use \textsc{RiccatiIterations} to obtain the optimal linear controller for the sampled model. In the continuous action setting, we compare the performance to the baseline algorithm multi-task soft-actor critic (\textbf{MT-SAC})~\citep{haarnoja2018soft, yu2020meta} and a modified \textbf{MT-SAC-TRL} using data from the novel task during learning. In the tabular MDP setting, we compare against multi-task proximal policy optimisation (\textbf{MT-PPO)}~\citep{schulman2017proximal, yu2020meta} and similarly \textbf{MT-PPO-TRL}. 

The objectives of our empirical study are two-fold:

1. How does \textsc{MLEMTRL} impact performance in terms of \textbf{learning speed}, \textbf{jumpstart improvement} and \textbf{asymptotic convergence} compared to our baseline?

2. What is the performance loss of \textsc{MLEMTRL} in the \textbf{non-realisable setting}?

We conduct two kinds of experiments to verify our hypotheses. Firstly, in the upper row of Figure~\ref{fig:full_results}, we consider the realisable setting, where the novel task $\mdp^*$ is part of the convex hull $\mathcal{C}(\mathcal{M}_s)$. In this case, we are looking to identify an improvement in some or all of the aforementioned qualities compared to the baselines.
Further, in the bottom row of Figure~\ref{fig:full_results}, we investigate whether the algorithm can generalise to the case beyond what is supported by the theory in Section~\ref{subsec:realisable}. We begin by recalling the goals of the transfer learning problem~\citep{langley2006transfer}.


\noindent\textit{Learning Speed Improvement:}
A learning speed improvement would be indicated by the algorithm reaching its asymptotic convergence with less data.

\noindent\textit{Asymptotic Improvement:}
 An asymptotic improvement would mean the algorithm converges asymptotically to a superior solution to that one of the baseline.
 
\noindent\textit{Jumpstart Improvement:}
A jumpstart improvement can be verified by the behaviour of the algorithm during the early learning process. In particular, if the algorithm starts at a better solution than the baseline, or has a simpler optimisation surface, it may more rapidly approach better solutions with much less data.
\setlength{\textfloatsep}{2pt}%
\begin{figure*}[t!]
    \centering
    \includegraphics[width=0.24\textwidth]{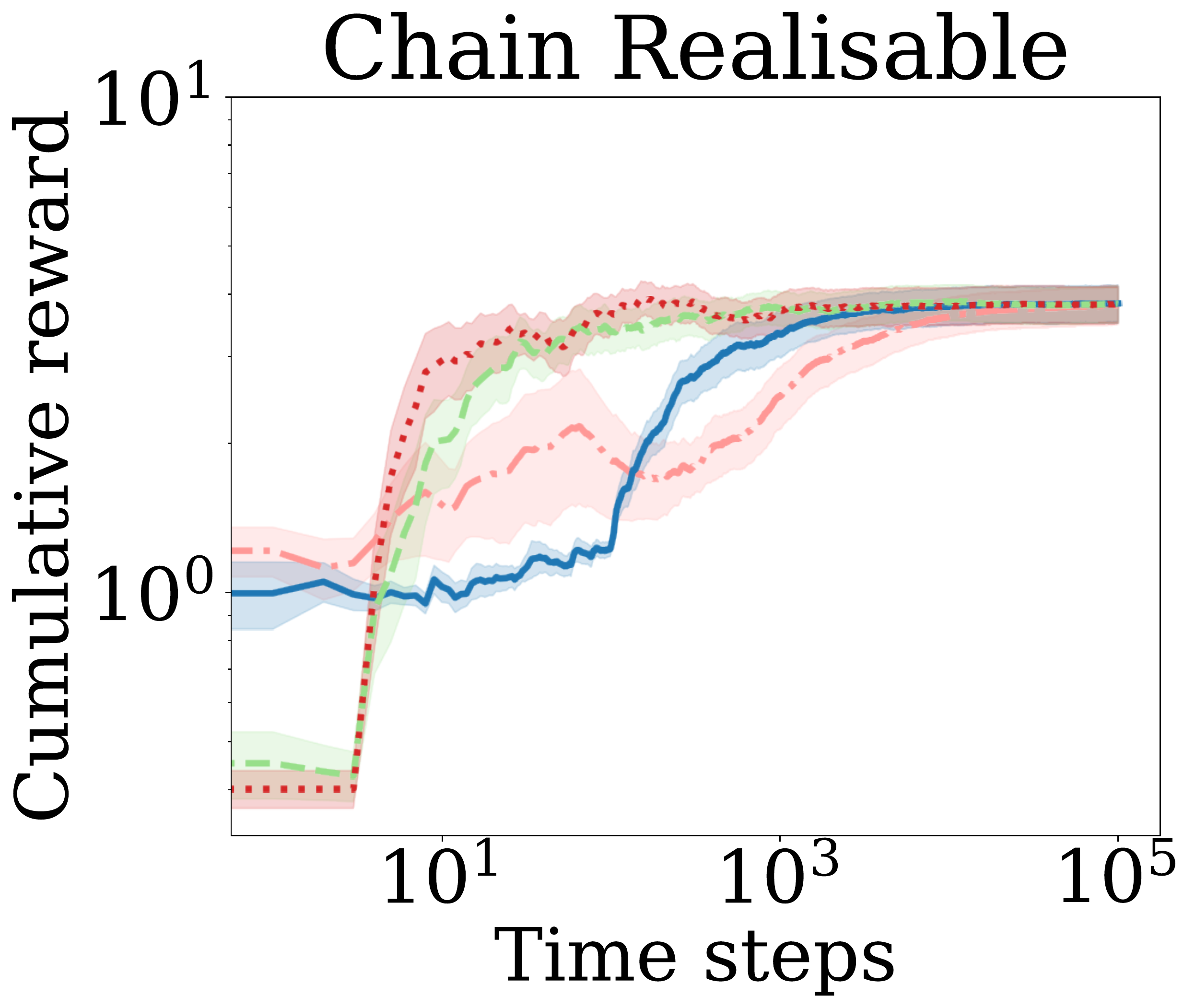}         
    \includegraphics[width=0.24\textwidth]{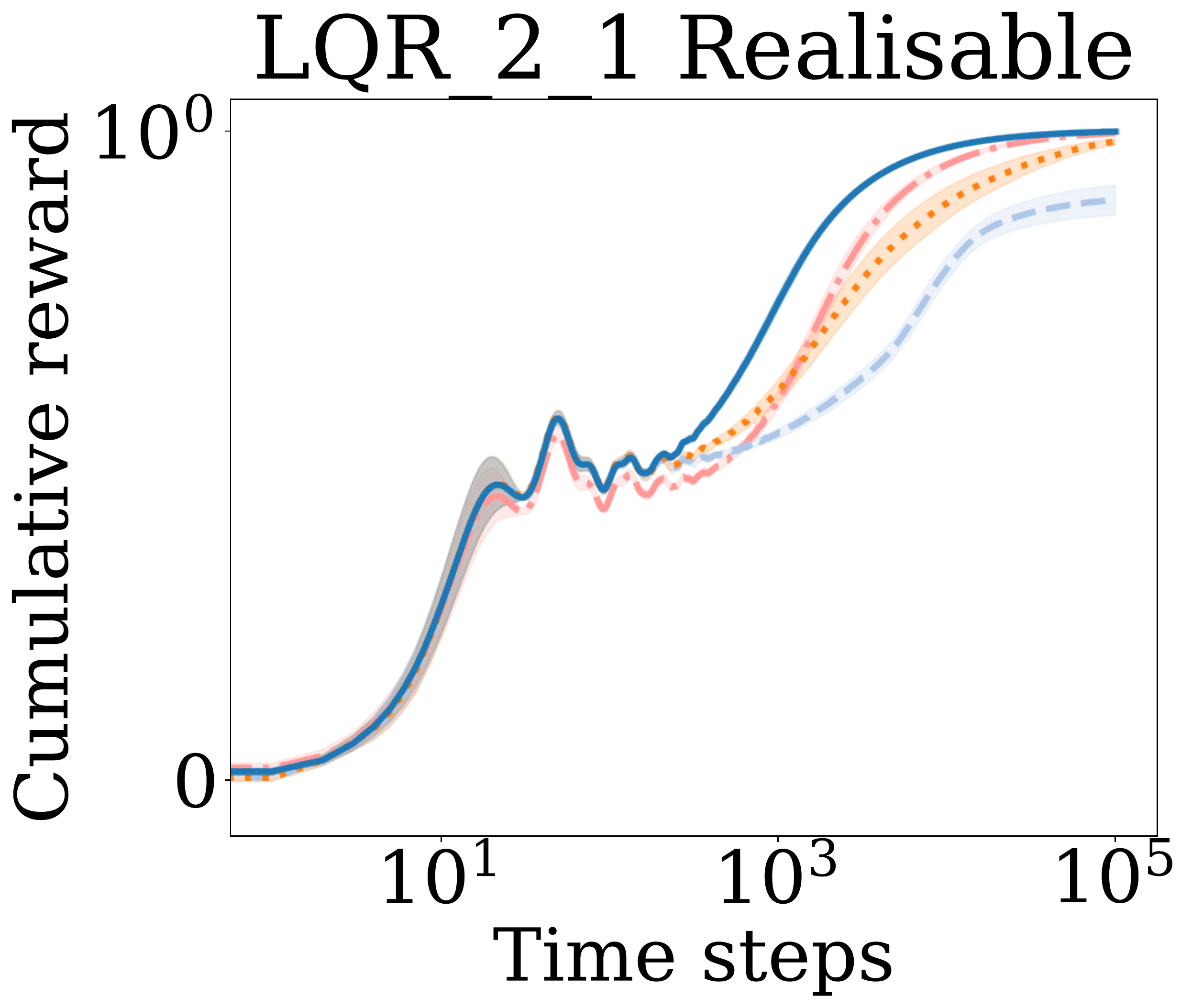} 
    \includegraphics[width=0.24\textwidth]{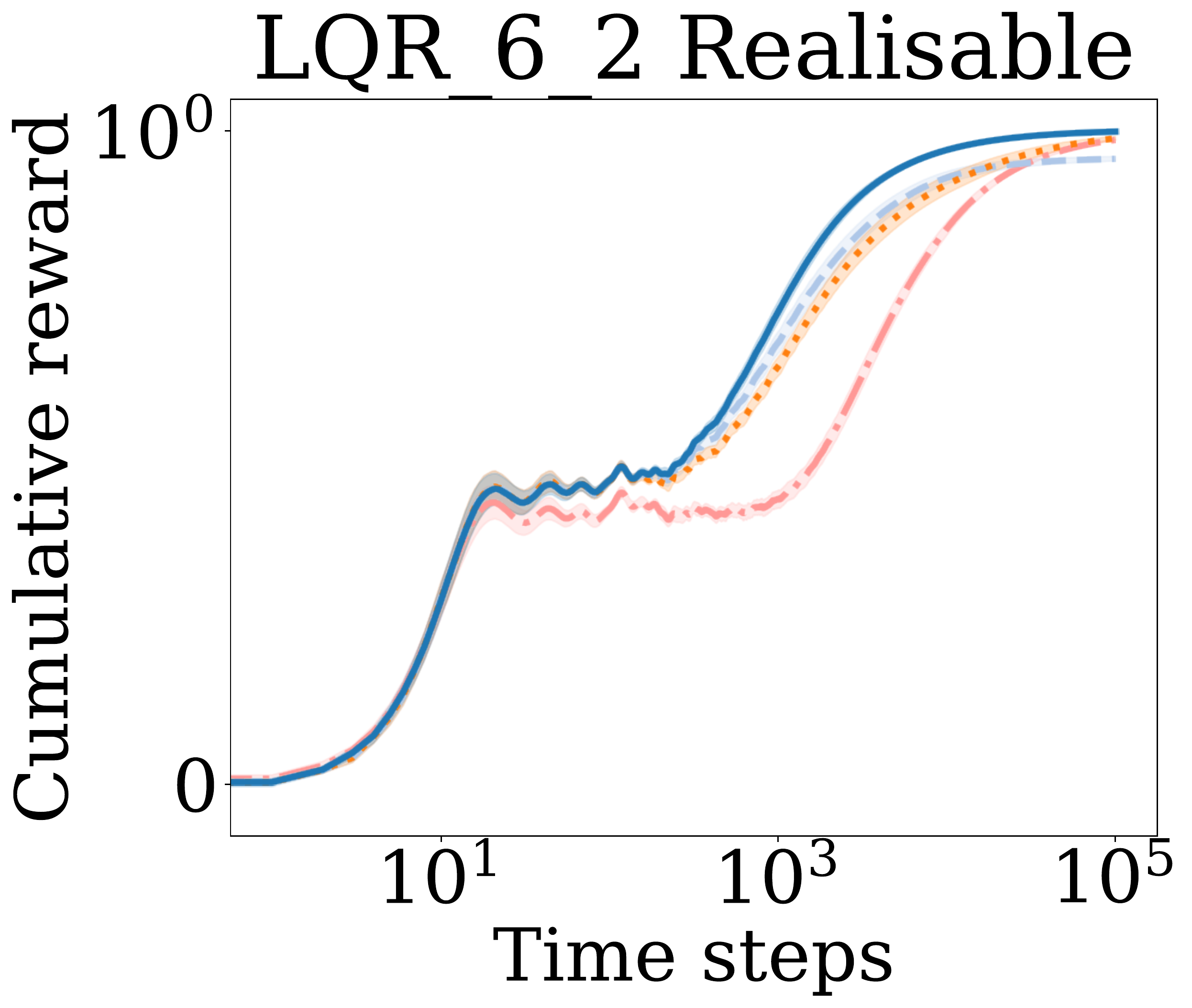}\\
    \includegraphics[width=0.24\textwidth]{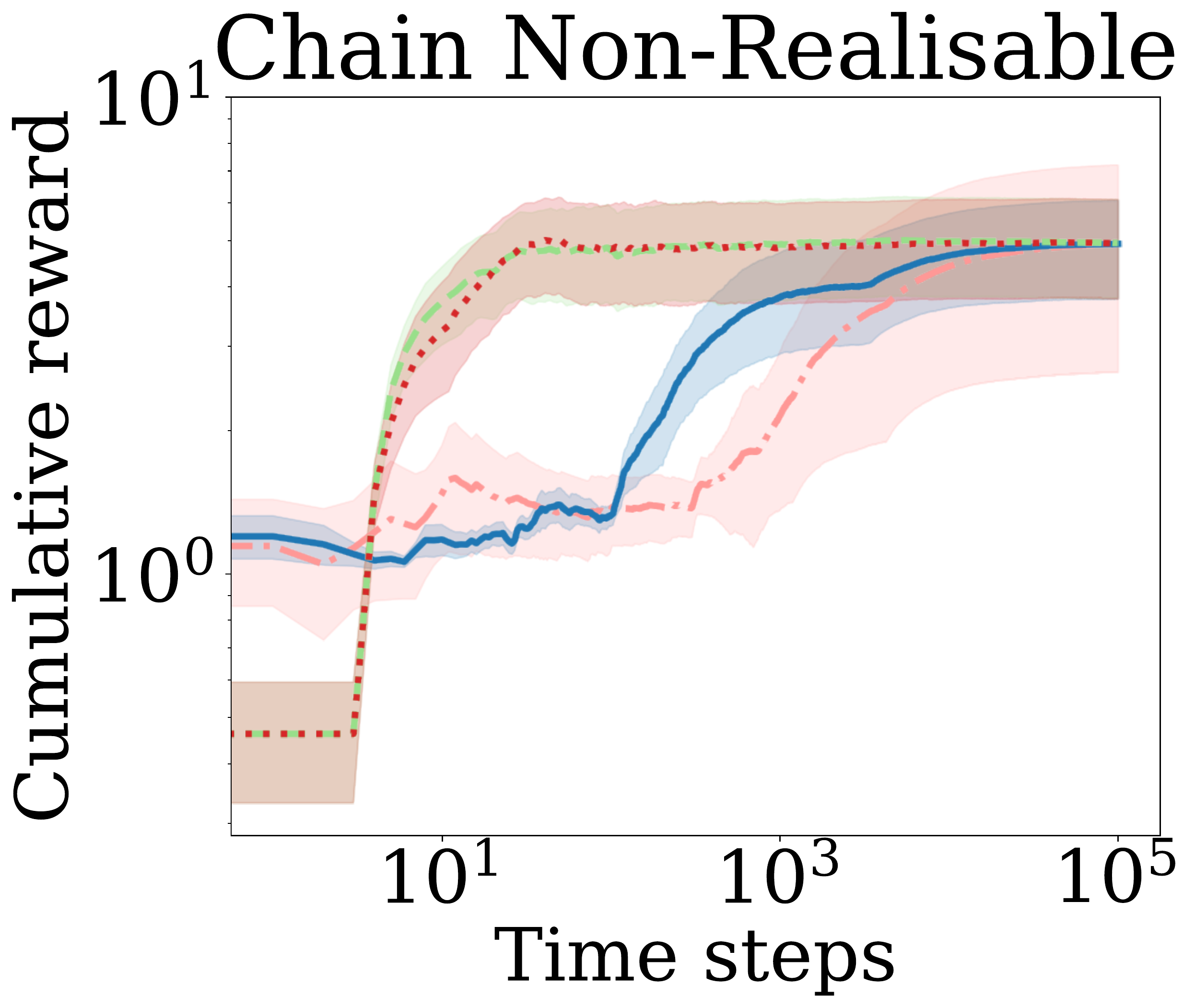}        
    \includegraphics[width=0.24\textwidth]{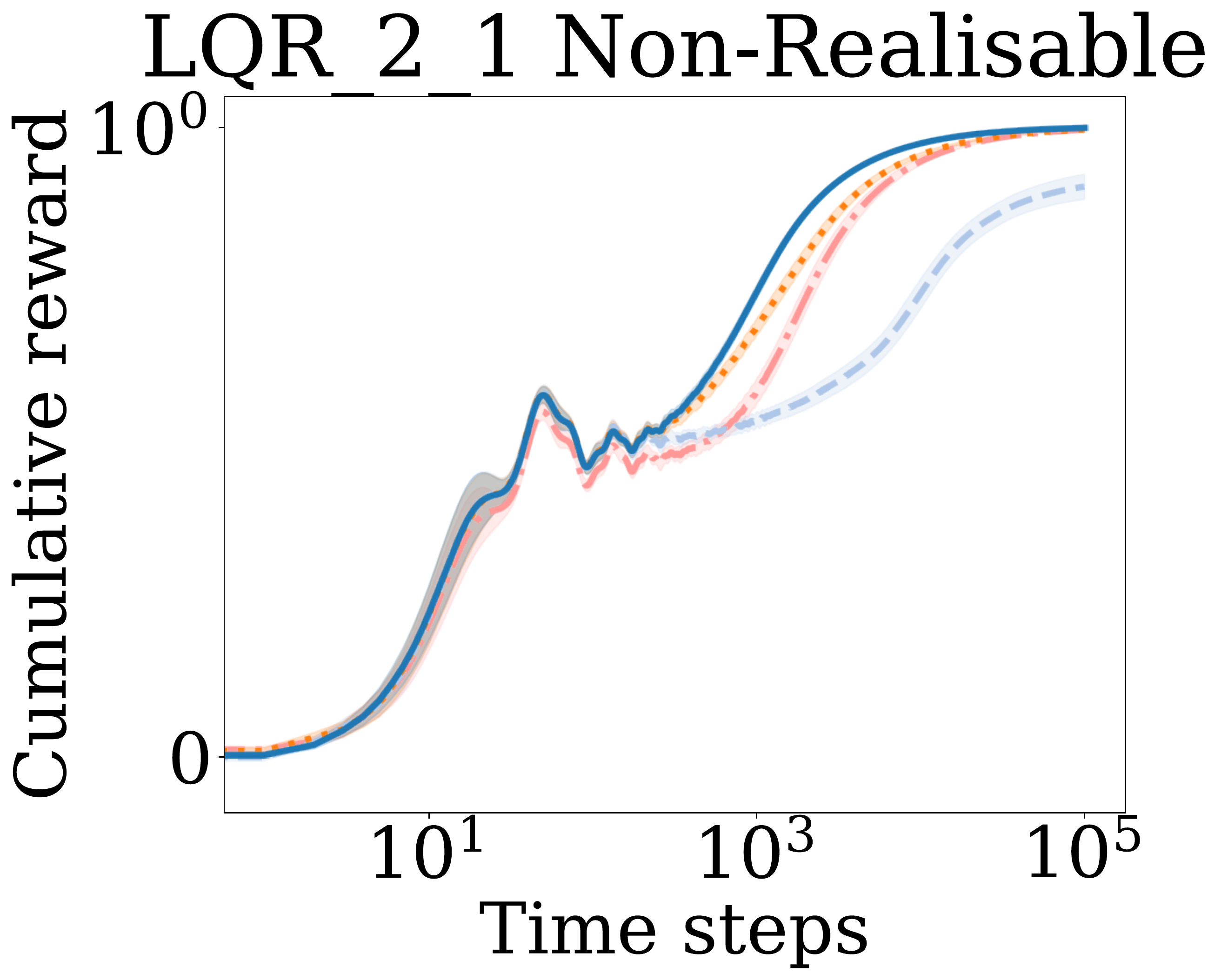} 
    \includegraphics[width=0.24\textwidth]{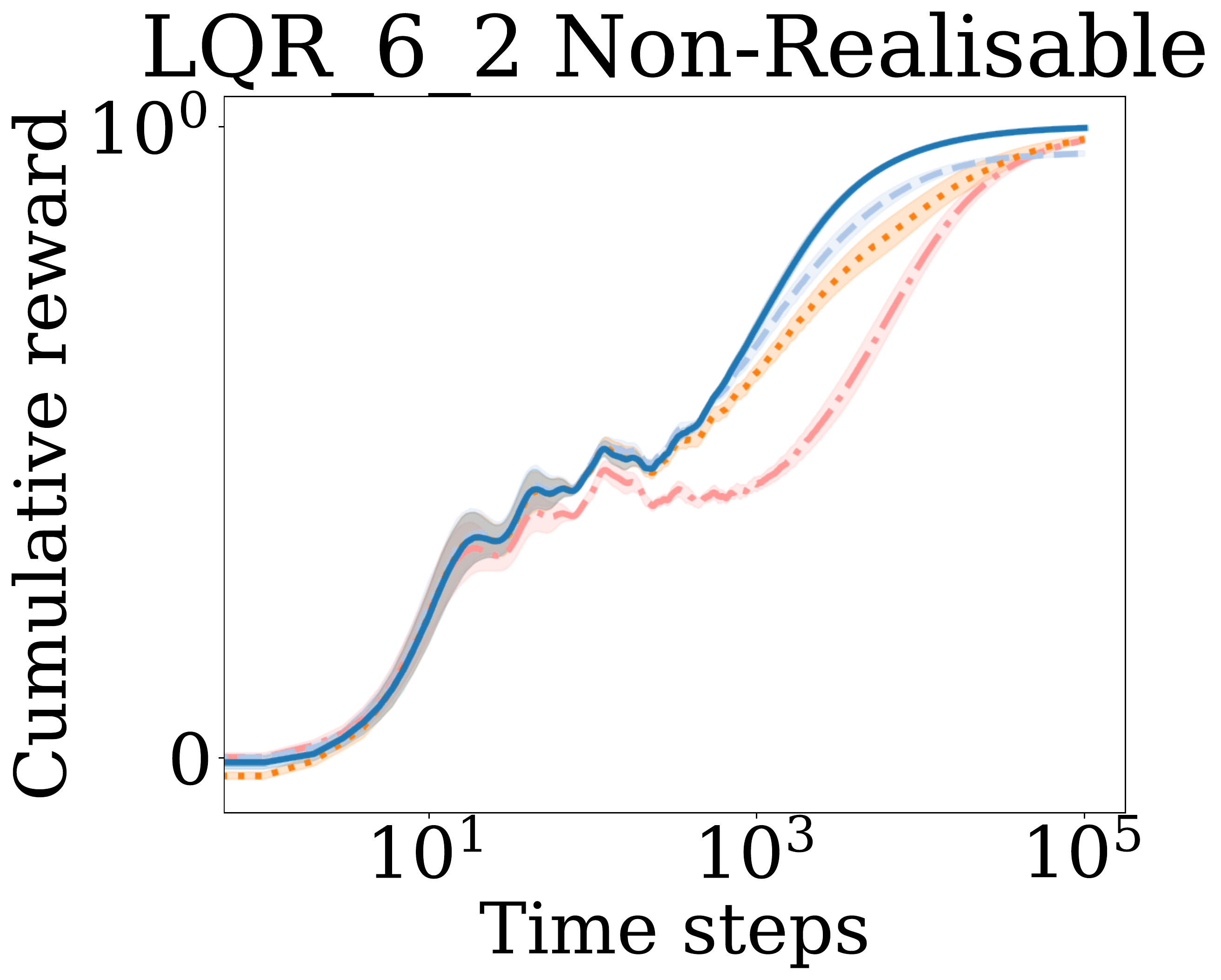}\\
    \includegraphics[width=0.74\textwidth]{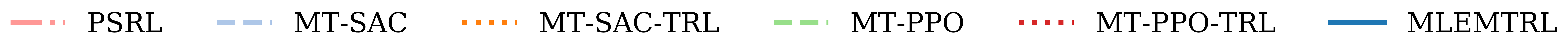}
    \caption{Depicted is the average cumulative reward at every time step computed over $10$ novel tasks in the realisable/non-realisable setting. The shaded regions represent the standard error of the average cumulative reward at the time step.}\label{fig:full_results}\vspace*{-1em}
\end{figure*}

\noindent\textbf{RL Environments.} We test the algorithms in a tabular MDP, i.e. Chain~\citep{dearden1998bayesian}, CartPole~\citep{barto1983neuronlike}, and two LQR tasks in \emph{Deepmind Control Suite}~\citep{tassa2018deepmind}: \emph{dm\_LQR\_2\_1} and \emph{dm\_LQR\_6\_2}. Further details on experimental setups are deferred to Appendix~\ref{sec:rl_env}.

\noindent\textbf{Impacts of Model Transfer with MLEMTRL.}\label{sec:impacts} We begin by evaluating the proposed algorithm in the Chain environment. The results of the said experiment are available in the leftmost column of Figure~\ref{fig:full_results}. In it, we evaluate the performance of MLEMTRL against PSRL, MT-PPO, MT-PPO-TRL. The experiments are done by varying the slippage parameter $p \in [0.00, 0.50]$ and the results are computed for each different setup of Chain from scratch. In this experiment, we can see the baseline algorithms MT-PPO and MT-PPO-TRL perform very well. This could partially be explained by PSRL and MLEMTRL not only having to learn the transition distribution but also the reward function. The value function transfer in the PPO-based baselines implicitly transfers not only the empirical transition model but also the reward function. We can see that MLEMTRL has improved learning speed compared to PSRL in both realisable and non-realisable settings. 
An additional experiment with a known reward function across tasks is shown in Figure~\ref{fig:known_reward_results}. 

In the centre and rightmost columns of Figure~\ref{fig:full_results}, we can see the results of running the algorithms in the LQR settings with the baseline algorithms PSRL, MT-SAC and MT-SAC-TRL. The variation over tasks is given by the randomness over the stiffness of the joints in the problem. In these experiments, we can see a clear advantage of MLEMTRL compared to all baselines in terms of learning speed improvements, and in some cases, asymptotic performance.

In Figure~\ref{fig:full_results}, the performance metric is the average cumulative reward at every time step, for $10^5$ time steps and the shaded region represents the standard deviation, where the statistics are computed over $10$ independent tasks.


\noindent\textbf{Impact of Realisability Gap on Regret.} Now, we further illustrate the observed relation between model dissimilarity and degradation in performance. Figure~\ref{fig:norms} depicts the regret against the KL-divergence of the target model to the best proxy model in the convex set. We observe that model dissimilarity influences the performance gap in MLEMTRL. 
This is also justified in the Section~\ref{sec:bounds} where the bounds have an explicit dependency on the model difference. In this figure, only the non-zero regret experiments are shown. This is to have an idea of which models result in poor performance. As its shown, it is those models that are very dissimilar. Additional results in Figure~\ref{fig:time} further illustrate the dependency on model similarity.

\begin{figure}[t!]
    \centering
    \includegraphics[width=0.31\textwidth]{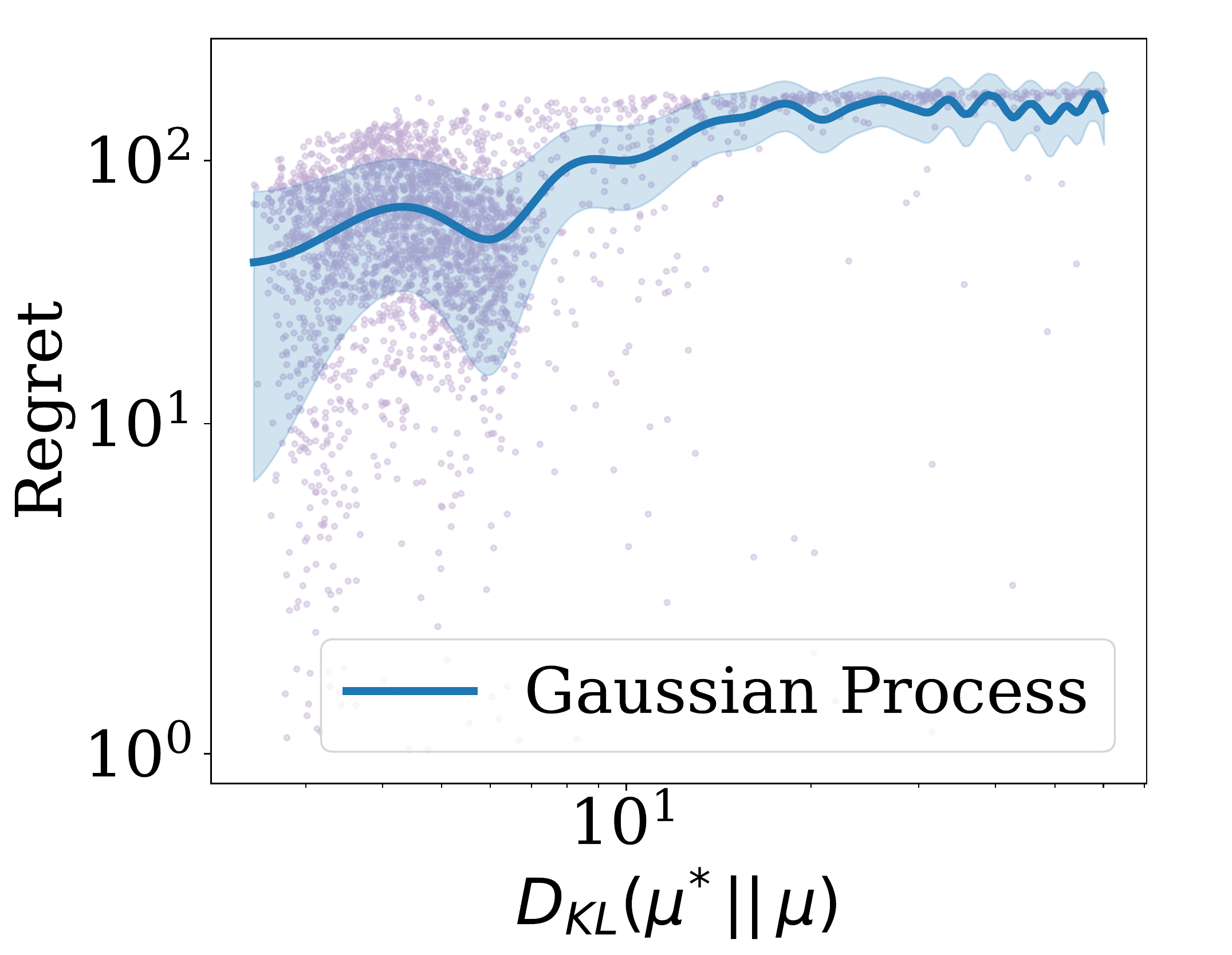}
    \caption{A log-log plot of the regret against the KL-divergence between the true MDP and the best proxy model in CartPole. The thick blue line is a Gaussian Process regression model fitted on the observed data (in purple).}\label{fig:norms}
\end{figure}

\noindent\textbf{Summary of Results.} In the experiments, we sought to identify whether the proposed algorithm shows superiority in terms of the transfer learning goals given by~\citet{langley2006transfer}. In the LQR-based environments, we can see a clear superiority in terms of learning speed compared to all baselines and in some cases, an asymptotic improvement. In the Chain environment the proposed algorithm outperforms PSRL in terms of learning speed.



Additional ablation studies showing how MLEMTRL can be augmented with a hierarchical procedure to take the empirical model into account in addition to the source models are depicted in Figures~\ref{fig:meta_results} and~\ref{fig:sac_results}. They show how the use of multi-task and transfer RL can improve performance over a standard RL approach. 


\section{Discussions and Future Work}\label{sec:discussion}
In this work, we aim to answer: 1. \emph{How can we accurately construct a model using a set of source models for an RL agent deployed in the wild?}
2. \emph{Does the constructed model allows us to perform efficient planning and yield improvements over learning from scratch?} 
Our answer to the first question is by adopting the \emph{Model Transfer Reinforcement Learning} framework and weighting existing knowledge together with data from the novel task. We accomplished this by the way of a maximum likelihood procedure, which resulted in a novel algorithm, MLEMTRL, consisting of a model identification stage and a model-based planning stage. 
The second question is answered by the empirical results in Section~\ref{sec:experiments} and the theoretical results in Section~\ref{sec:bounds}. We can clearly see the model allows for generalisation to novel tasks, given that the tasks are similar enough to the existing task. 

We motivate the use of our framework in settings where an agent is to be deployed in a new domain that is similar to existing, known, domains. We verify the quick, near-optimal performance of the algorithm in the case where the new domain is similar and we prove worst-case performance bounds of the algorithm in both the realisable and non-realisable settings.





\acks{This work was partially supported by the Wallenberg AI, Autonomous Systems and Software Program (WASP) funded by the Knut and Alice Wallenberg Foundation and the computations were performed on resources at Chalmers Centre for Computational Science and Engineering (C3SE) provided by the Swedish National Infrastructure for Computing (SNIC).}

\newpage

\vskip 0.2in
\bibliography{references}

\newpage

\appendix
\section{Detailed Proofs}

\subsection{Proof of Theorem~\ref{lemma:non-realisable}}\label{sec:proof_non_realisability}
\begin{proof}[Proof of Theorem~\ref{lemma:non-realisable}]
We begin by introducing the appropriate definitions and lemmas.  

\begin{definition}[$\epsilon$-homogeneity]\label{def:eps_homo}
    Given two MDPs $\mdp_1 = (\mathcal{S}, \mathcal{A}, \mathcal{R}, \mathcal{T}_1, \gamma)$ and $\mdp_2 = (\mathcal{S}, \mathcal{A}, \mathcal{R}, \mathcal{T}_2, \gamma)$ we say $\mdp_2$ is a $\epsilon$-homogenous partition of $\mdp_1$ with respect to $L_k$ norm if
    \begin{equation}
        \forall a \in \mathcal{A} \quad \Big(\sum_{s' \in \mathcal{S}} \big(\sum_{s\in\mathcal{S}}\mathcal{T}_1(s, a, s')-\mathcal{T}_2(s, a, s')\big)^k\Big)^{\frac{1}{k}} \leq \epsilon.
    \end{equation}
\end{definition}

Note that this definition of $\epsilon$-homogeneity is a special case of Definition 3~\citep{even2003approximate}, where the reward functions and state spaces are taken to be identical. We will only study partitions with respect to the $L_1$ norm between transition probabilities.

\begin{lemma}[Lemma 3~\citet{even2003approximate}]\label{lemma:same_pol}
    Let $\mdp_2$ be an $\epsilon$-homogenous partition of $\mdp_1$, then, with respect to $L_1$ norm, an arbitrary policy $\pol$ in $\mdp_2$ induces an $\frac{\epsilon}{(1-\gamma)^2}-$optimal policy in $\mdp_1$.
    \begin{equation}
        ||V_{\mdp_1}^\pol-V_{\mdp_2}^\pol||_\infty \leq \frac{\epsilon}{(1-\gamma)^2}.
    \end{equation}
\end{lemma}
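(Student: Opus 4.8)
The plan is to prove this value-closeness bound by the standard simulation-lemma (telescoping) argument applied to the two policy-evaluation Bellman equations, exploiting that $\mdp_1$ and $\mdp_2$ share the same reward $\mathcal{R}$, discount $\gamma$, and evaluated policy $\pol$, and differ only in their transition kernels $\mathcal{T}_1,\mathcal{T}_2$. First I would write, for each state $s$ and $i\in\{1,2\}$,
\[
V_{\mdp_i}^\pol(s) = \sum_a \pol(a\,|\,s)\Big(\mathcal{R}(s,a) + \gamma \sum_{s'} \mathcal{T}_i(s,a,s')\, V_{\mdp_i}^\pol(s')\Big).
\]
Subtracting the $i=2$ equation from the $i=1$ equation cancels the shared reward term, and inserting the cross term $\mathcal{T}_1 V_{\mdp_2}^\pol$ produces the decomposition
\[
V_{\mdp_1}^\pol(s) - V_{\mdp_2}^\pol(s) = \gamma \sum_a \pol(a\,|\,s)\Big[\sum_{s'} \mathcal{T}_1(s,a,s')\big(V_{\mdp_1}^\pol(s') - V_{\mdp_2}^\pol(s')\big) + \sum_{s'}\big(\mathcal{T}_1(s,a,s') - \mathcal{T}_2(s,a,s')\big)V_{\mdp_2}^\pol(s')\Big],
\]
where the first inner sum is a $\gamma$-contraction in the value gap and the second is a transition-mismatch term.

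Next I would bound the mismatch term. Since $r(s,a)\in[0,1]$, the geometric series gives $\|V_{\mdp_2}^\pol\|_\infty \le 1/(1-\gamma)$. Applying H\"older's inequality together with the $\epsilon$-homogeneity assumption specialised to $L_1$ yields, for every $(s,a)$,
\[
\Big|\sum_{s'}\big(\mathcal{T}_1(s,a,s') - \mathcal{T}_2(s,a,s')\big)V_{\mdp_2}^\pol(s')\Big| \le \big\|\mathcal{T}_1(\cdot\,|\,s,a) - \mathcal{T}_2(\cdot\,|\,s,a)\big\|_1\,\|V_{\mdp_2}^\pol\|_\infty \le \frac{\epsilon}{1-\gamma}.
\]
Because $\pol(\cdot\,|\,s)$ and $\mathcal{T}_1(\cdot\,|\,s,a)$ are probability distributions, taking absolute values and then the supremum over $s$ collapses the convex combinations and gives the self-referential inequality
\[
\big\|V_{\mdp_1}^\pol - V_{\mdp_2}^\pol\big\|_\infty \le \gamma\,\big\|V_{\mdp_1}^\pol - V_{\mdp_2}^\pol\big\|_\infty + \frac{\gamma\epsilon}{1-\gamma}.
\]

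Finally I would solve this inequality: rearranging gives $\|V_{\mdp_1}^\pol - V_{\mdp_2}^\pol\|_\infty \le \gamma\epsilon/(1-\gamma)^2 \le \epsilon/(1-\gamma)^2$, which is exactly the claimed bound (the extra $\gamma$ factor makes the true bound slightly tighter than stated). The main obstacle I anticipate is reconciling the stated $\epsilon$-homogeneity definition, which aggregates the transition discrepancy over source states $s$ inside the $L_1$ sum, with the per-$(s,a)$ transition $L_1$ bound that the H\"older step requires. I would resolve this by arguing that, under the identical-state-space specialisation of the partition (the setting noted after the definition), the aggregated $L_1$ quantity appearing in the definition controls the per-$(s,a)$ quantity $\|\mathcal{T}_1(\cdot\,|\,s,a) - \mathcal{T}_2(\cdot\,|\,s,a)\|_1$ actually used, so that $\epsilon$ is a valid uniform bound. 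Handling stochastic policies and the supremum over states is then routine, since all the weights involved are probabilities summing to one and the contraction structure of the Bellman operator is preserved.
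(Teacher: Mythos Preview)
The paper does not supply its own proof of this lemma: it is stated as Lemma~3 of \citet{even2003approximate} and used as a black box inside the proof of Theorem~\ref{lemma:non-realisable}. So there is no paper-side argument to compare against beyond the citation.

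Your proposal is the standard simulation-lemma proof and is correct in substance. Writing the two policy-evaluation Bellman equations, subtracting, splitting into a $\gamma$-contraction term and a transition-mismatch term, bounding the latter by $\|\mathcal{T}_1(\cdot\mid s,a)-\mathcal{T}_2(\cdot\mid s,a)\|_1\cdot\|V_{\mdp_2}^\pol\|_\infty\le \epsilon/(1-\gamma)$, and then solving the resulting fixed-point inequality is exactly the argument that underlies the cited result. Your observation that the bound is actually $\gamma\epsilon/(1-\gamma)^2$ and hence slightly tighter than stated is also correct.

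The one genuine caveat you flag is real: as written in this paper, Definition~\ref{def:eps_homo} with $k=1$ reads $\sum_{s'}\sum_s\big(\mathcal{T}_1(s,a,s')-\mathcal{T}_2(s,a,s')\big)\le\epsilon$, which, taken literally (no absolute values, summed over source states), is trivially zero for any pair of stochastic kernels and cannot drive the H\"older step you need. The intended meaning, consistent with the original \citet{even2003approximate} definition and with how the paper actually uses it, is a uniform per-$(s,a)$ total-variation bound $\max_{s,a}\|\mathcal{T}_1(\cdot\mid s,a)-\mathcal{T}_2(\cdot\mid s,a)\|_1\le\epsilon$. Your plan to resolve this by reading the identical-state-space specialisation as giving a per-$(s,a)$ $L_1$ control is the right fix; just state explicitly that you are interpreting $\epsilon$-homogeneity in that sense, since the paper's displayed formula is notationally defective rather than a different hypothesis.
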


\begin{lemma}[Lemma 4~\citet{even2003approximate}]\label{lemma:opt_opt}
    Let $\mdp_2$ be an $\epsilon$-homogenous partition of $\mdp_1$, then, with respect to $L_1$ norm, the optimal policy in $\mdp_2$ induces an $\frac{2\epsilon}{(1-\gamma)^2}-$optimal policy in $\mdp_1$. 
    \begin{equation}
        ||V_{\mdp_1}^*-V_{\mdp_2}^*||_\infty \leq \frac{2\epsilon}{(1-\gamma)^2}.
    \end{equation}
\end{lemma}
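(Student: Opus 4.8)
The plan is to reduce everything to the total model deviation $\|\mathcal{T}^* - \hat{\mathcal{T}}\|_1$ and then invoke the two robustness lemmas already established for $\epsilon$-homogeneous partitions. First I would combine the two error sources through the triangle inequality: since $\mdp$ is the best proxy in $\mathcal{C}(\mathcal{M}_s)$ and $\hat{\mdp}$ is its maximum likelihood estimate,
\[
\|\mathcal{T}^* - \hat{\mathcal{T}}\|_1 \leq \|\mathcal{T}^* - \mathcal{T}\|_1 + \|\mathcal{T} - \hat{\mathcal{T}}\|_1 = \epsilon_{\mathrm{Realise}} + \epsilon_{\mathrm{Estim}}.
\]
Writing $\epsilon \triangleq \epsilon_{\mathrm{Estim}} + \epsilon_{\mathrm{Realise}}$, this exhibits $\hat{\mdp}$ as an $\epsilon$-homogeneous partition of $\mdp^*$ with respect to the $L_1$ norm in the sense of Definition~\ref{def:eps_homo}, which is precisely the hypothesis required by both Lemma~\ref{lemma:same_pol} and Lemma~\ref{lemma:opt_opt}.

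Next I would decompose the target performance gap by inserting the value functions of the estimated MDP $\hat{\mdp}$. The crucial observation is that $\hat{\pol}$ is optimal for $\hat{\mdp}$, so $V_{\hat{\mdp}}^{\hat{\pol}} = V_{\hat{\mdp}}^*$ and the intermediate term cancels:
\[
V_{\mdp^*}^* - V_{\mdp^*}^{\hat{\pol}} = \big(V_{\mdp^*}^* - V_{\hat{\mdp}}^*\big) + \big(V_{\hat{\mdp}}^{\hat{\pol}} - V_{\mdp^*}^{\hat{\pol}}\big).
\]
The first bracket compares the two \emph{optimal} value functions across $\mdp^*$ and $\hat{\mdp}$, so by Lemma~\ref{lemma:opt_opt} its $\infty$-norm is at most $2\epsilon/(1-\gamma)^2$. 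The second bracket compares the value of the \emph{single fixed} policy $\hat{\pol}$ across the two MDPs, so by Lemma~\ref{lemma:same_pol} its $\infty$-norm is at most $\epsilon/(1-\gamma)^2$. Taking $\infty$-norms and applying the triangle inequality yields
\[
\big\|V_{\mdp^*}^* - V_{\mdp^*}^{\hat{\pol}}\big\|_\infty \leq \frac{2\epsilon}{(1-\gamma)^2} + \frac{\epsilon}{(1-\gamma)^2} = \frac{3(\epsilon_{\mathrm{Estim}} + \epsilon_{\mathrm{Realise}})}{(1-\gamma)^2},
\]
which is the claimed bound.

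The main obstacle I anticipate is not the telescoping, which is a one-line cancellation once optimality of $\hat{\pol}$ for $\hat{\mdp}$ is noted, but rather justifying cleanly that the additive decomposition $\|\mathcal{T}^* - \hat{\mathcal{T}}\|_1 \leq \epsilon_{\mathrm{Realise}} + \epsilon_{\mathrm{Estim}}$ feeds correctly into the $\epsilon$-homogeneity hypothesis of Definition~\ref{def:eps_homo}. I would need to verify that the per-action $L_1$ quantity appearing in that definition is dominated (up to the maximum over actions) by the transition-matrix $L_1$ norm used to define $\epsilon_{\mathrm{Realise}}$ and $\epsilon_{\mathrm{Estim}}$, so that a single scalar $\epsilon$ can serve simultaneously in both lemmas. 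A secondary subtlety is that $\epsilon_{\mathrm{Estim}}$ is the estimation error between the proxy $\mdp$ and its estimate $\hat{\mdp}$ rather than between $\mdp^*$ and $\hat{\mdp}$; the triangle inequality is exactly what bridges this gap, and it is important that both component norms are measured consistently (worst-case over state-action pairs) for the homogeneity parameter to remain valid throughout.
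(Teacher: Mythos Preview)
Your proposal does not prove the stated lemma. The statement in question is Lemma~\ref{lemma:opt_opt}, which asserts that for two $\epsilon$-homogeneous MDPs $\mdp_1,\mdp_2$ one has $\|V_{\mdp_1}^* - V_{\mdp_2}^*\|_\infty \leq 2\epsilon/(1-\gamma)^2$. You never establish this inequality; on the contrary, you \emph{invoke} it as a black box (``by Lemma~\ref{lemma:opt_opt} its $\infty$-norm is at most $2\epsilon/(1-\gamma)^2$''). What you have actually written is a proof of Theorem~\ref{lemma:non-realisable}, the performance-gap bound $\|V_{\mdp^*}^* - V_{\mdp^*}^{\hat\pol}\|_\infty \leq 3(\epsilon_{\mathrm{Estim}}+\epsilon_{\mathrm{Realise}})/(1-\gamma)^2$, which \emph{uses} Lemma~\ref{lemma:opt_opt} as one of its two ingredients. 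A proof of Lemma~\ref{lemma:opt_opt} itself would instead need to relate the optimal value functions of two nearby MDPs directly, typically by first invoking Lemma~\ref{lemma:same_pol} for each of the two optimal policies $\pol_1^*,\pol_2^*$ and then using the optimality inequalities $V_{\mdp_1}^{\pol_2^*}\leq V_{\mdp_1}^*$ and $V_{\mdp_2}^{\pol_1^*}\leq V_{\mdp_2}^*$ to sandwich the difference; none of that appears in your write-up. Note also that in the paper this lemma is simply cited from \citet{even2003approximate} and not re-proved.

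If your intended target was in fact Theorem~\ref{lemma:non-realisable}, then your argument is correct and coincides with the paper's own proof: the same triangle inequality on transition kernels to obtain $\epsilon=\epsilon_{\mathrm{Estim}}+\epsilon_{\mathrm{Realise}}$, the same insertion of $V_{\hat\mdp}^{\hat\pol}$ using that $\hat\pol$ is optimal for $\hat\mdp$, and the same application of Lemma~\ref{lemma:same_pol} and Lemma~\ref{lemma:opt_opt} to the two pieces.
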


Given the assumptions in Theorem~\ref{lemma:non-realisable} hold true. Then, using the $\epsilon-$homogeneity definition in Definition~\ref{def:eps_homo}, let $\hat{\mdp}$ be an $\epsilon_{\mathrm{Estim}}-$homogenous partition of $\mdp$ and $\mdp$ be an $\epsilon_{\mathrm{Realise}}-$homogenous partition of $\mdp^*$. Under the $L_1$ norm then, we have     
    \begin{align}
        \forall a \in \mathcal{A} \quad &\Big(\sum_{s' \in \mathcal{S}} \sum_{s\in\mathcal{S}}\mathcal{T}(s, a, s')-\hat{\mathcal{T}}(s, a, s')\Big) \leq \epsilon_{\mathrm{Estim}},\\
         &\Big(\sum_{s' \in \mathcal{S}} \sum_{s\in\mathcal{S}}\mathcal{T}^*(s, a, s')-\mathcal{T}(s, a, s')\Big) \leq \epsilon_{\mathrm{Realise}}.
    \end{align}
Using triangle inequalities we can then bound the $L_1$ norm between the true underlying MDP and the maximum likelihood estimator,
\begin{align}
     \forall a \in \mathcal{A} \quad &\Big(\sum_{s' \in \mathcal{S}} \sum_{s\in\mathcal{S}}\mathcal{T}^*(s, a, s')-\hat{\mathcal{T}}(s, a, s')\Big)\\
    = \, &\Big(\sum_{s' \in \mathcal{S}} \sum_{s\in\mathcal{S}}\mathcal{T}^*(s, a, s')-\mathcal{T}(s,a,s')+\mathcal{T}(s,a,s')-\hat{\mathcal{T}}(s, a, s')\Big)\\
    \leq \, &\Big(\sum_{s' \in \mathcal{S}} \sum_{s\in\mathcal{S}}\mathcal{T}^*(s, a, s')-\mathcal{T}(s,a,s')\Big) +\Big(\sum_{s' \in \mathcal{S}} \sum_{s\in\mathcal{S}}\mathcal{T}(s, a, s')-\hat{\mathcal{T}}(s,a,s')\Big)\\ \leq \, &\epsilon_{\mathrm{Estim}}+\epsilon_{\mathrm{Realise}}.
\end{align}

Thus, $\hat{\mdp}$ is a $(\epsilon_{\mathrm{Estim}}+\epsilon_{\mathrm{Realise}})-$homogenous partition of $\mdp^*$. The next steps involves creating a bound on the performance gap between the value functions and policies in $||V_{{\mdp^*}}^{*}-V_{{\mdp^*}}^{\hat{\pol}}||_\infty$. Using triangle inequalities the performance gap can be expanded further,

\begin{equation}
\begin{aligned}\label{eq:triangle}
    ||V_{{\mdp^*}}^{*}-V_{{\mdp^*}}^{\hat{\pol}}||_\infty &= ||V_{{\mdp^*}}^{*}-V_{\hat{\mdp}}^{\hat{\pol}}+V_{\hat{\mdp}}^{\hat{\pol}}-V_{{\mdp^*}}^{\hat{\pol}}||_\infty\\
    &\leq ||V_{{\mdp^*}}^{*}-V_{\hat{\mdp}}^{\hat{\pol}}||_\infty+||V_{\hat{\mdp}}^{\hat{\pol}}-V_{{\mdp^*}}^{\hat{\pol}}||_\infty.
\end{aligned}
\end{equation}

The first term on the right side of the inequality in Eq.~\ref{eq:triangle} can be bounded using Lemma~\ref{lemma:opt_opt} since $\hat{\pol}$ is the optimal policy in $\hat{\mdp}$,

\begin{equation}
    ||V^{*}_{\mdp^{*}}-V_{\hat{\mdp}}^{\hat{\pol}}||_\infty \leq \frac{2(\epsilon_{\mathrm{Estim}}+\epsilon_{\mathrm{Realise}})}{(1-\gamma)^2}.
\end{equation}
Likewise, the second term in the inequality can be bounded using Lemma~\ref{lemma:same_pol},

\begin{equation}
    ||V_{\hat{\mdp}}^{\hat{\pol}}-V_{\mdp^*}^{\hat{\pol}}||_\infty \leq \frac{\epsilon_{\mathrm{Estim}}+\epsilon_{\mathrm{Realise}}}{(1-\gamma)^2}.
\end{equation}

Combining these two terms yields us $||V_{{\mdp^*}}^{*}-V_{{\mdp^*}}^{\hat{\pol}}||_\infty \leq \frac{3(\epsilon_{\mathrm{Estim}}+\epsilon_{\mathrm{Realise}})}{(1-\gamma)^2}$.
\end{proof}

\subsection{Proof of Remark~\ref{remark:l1_norm}}\label{sec:proof_remark}
\begin{proof}[Proof of Remark~\ref{remark:l1_norm}]
    The analysis of the concentration of $\epsilon_{\mathrm{Estim}}$ follows the works of~\citet{auer2008near, qian2020concentration}. Let $\Delta^{\mathcal{S}}$ be the $(\mathcal{S}-1)-$dimensional simplex and $\mathcal{T}_{s,a}^* \in \Delta^{\mathcal{S}}$ be the transition kernel for a state-action pair of the true underlying MDP $\mu^*$ and $\hat{\mathcal{T}}_{s,a} \in \Delta^{\mathcal{S}}$ a random vector. If $\hat{\mathcal{T}}_{s,a}$ is taken to be the empirical estimate of $\mathcal{T}_{s,a}^*$ then the following lemma can be invoked.
    
    \begin{proposition}[\citet{weissman2003inequalities}]\label{lemma:weissman}
        Let $\mathcal{T}_{s,a}^* \in \Delta^{\mathcal{S}}$ and $\hat{\mathcal{T}}_{s,a} \sim \frac{1}{n^{s,a}}\mathrm{Multinomial}(n^{s,a}, \mathcal{T}_{s,a}^*)$. Then, for $S \geq 2, \delta \in [0, 1]$ and $n^{s,a} \geq 1$,
        \begin{equation}
            \mathbb{P}\Bigg(||\mathcal{T}_{s,a}^*-\hat{\mathcal{T}}_{s,a}||_1\geq \sqrt{\frac{2S\log(2/\delta)}{n^{s,a}}}\Bigg)\leq \mathbb{P}\Bigg(||\mathcal{T}_{s,a}^*-\hat{\mathcal{T}}_{s,a}||_1\geq \sqrt{\frac{2\log\big((2^S -2)/\delta)\big)}{n^{s,a}}}\Bigg)\leq \delta.
        \end{equation}
    \end{proposition}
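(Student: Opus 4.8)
The plan is to invoke the standard variational characterisation of the $L_1$ distance between two distributions on a finite alphabet, and then reduce the tail probability to a union bound over subset-deviations, each controlled by Hoeffding's inequality. Writing $p = \mathcal{T}_{s,a}^*$ and $\hat{p} = \hat{\mathcal{T}}_{s,a}$ for brevity, the first step is to establish the identity
\[
    \|p - \hat{p}\|_1 = 2\max_{A \subseteq [S]} \big(\hat{p}(A) - p(A)\big),
\]
where $p(A) = \sum_{i \in A} p_i$. This follows by splitting the coordinate sum $\sum_i |\hat{p}_i - p_i|$ into its positive and negative parts and noting that, since $p$ and $\hat{p}$ both sum to one, the two parts are equal; the maximising set is $A^+ = \{i : \hat{p}_i \geq p_i\}$.

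Next, I would fix a subset $A$ and observe that $n^{s,a}\hat{p}(A)$ is a $\mathrm{Binomial}(n^{s,a}, p(A))$ variable, i.e. a sum of $n^{s,a}$ i.i.d. Bernoulli indicators of $\{X_j \in A\}$. Hoeffding's inequality for bounded i.i.d. variables then gives the one-sided tail $\mathbb{P}(\hat{p}(A) - p(A) \geq t) \leq e^{-2n^{s,a}t^2}$. Combining this with the identity above at $t = \epsilon/2$, a union bound over subsets yields $\mathbb{P}(\|p-\hat{p}\|_1 \geq \epsilon) \leq |\mathcal{F}|\,e^{-n^{s,a}\epsilon^2/2}$, where $\mathcal{F}$ is the collection of subsets to be counted. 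The key accounting step is that $A = \emptyset$ and $A = [S]$ produce deviation exactly zero and so contribute nothing to the event when $\epsilon > 0$; this prunes the count from $2^S$ to $2^S - 2$, which is precisely what produces the sharper logarithmic factor. Solving $(2^S-2)e^{-n^{s,a}\epsilon^2/2} = \delta$ for $\epsilon$ gives the tighter threshold $\epsilon = \sqrt{2\log((2^S-2)/\delta)/n^{s,a}}$, establishing the rightmost inequality.

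Finally, the leftmost inequality --- that the $S$-dependent radius $\sqrt{2S\log(2/\delta)/n^{s,a}}$ is a looser (hence admissible) threshold --- reduces to a monotonicity comparison, since the event with the larger threshold is contained in the event with the smaller one. It therefore suffices to check $\sqrt{2\log((2^S-2)/\delta)/n^{s,a}} \leq \sqrt{2S\log(2/\delta)/n^{s,a}}$, which rearranges to $\log(2^S-2) - S\log 2 \leq (1-S)\log\delta$; the left side is nonpositive because $2^S - 2 \leq 2^S$, while the right side is nonnegative because $S \geq 2$ and $\log\delta \leq 0$.

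I expect the main obstacle to be purely bookkeeping rather than conceptual: getting the factor of $2$ in the $L_1$ identity and the subset-count reduction $2^S \to 2^S - 2$ exactly right, since these two constants are what determine whether the derived thresholds match those stated in the proposition. The probabilistic content --- Hoeffding together with a union bound --- is routine once the variational identity is in place.
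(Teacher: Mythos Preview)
Your argument is correct and is essentially the standard proof of this inequality as given in Weissman et al.\ (2003): the variational identity $\|p-\hat p\|_1 = 2\max_A(\hat p(A)-p(A))$, Hoeffding on each fixed subset, and the union bound with the trivial sets $\emptyset,[S]$ removed to get the $2^S-2$ count. The monotonicity step for the looser threshold is also fine.

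The only thing to note is that the paper does not actually prove this proposition; it is stated with attribution to \citet{weissman2003inequalities} and then invoked as a black box inside the proof of Remark~\ref{remark:l1_norm}. So there is no ``paper's own proof'' to compare against here---you have supplied a self-contained proof where the paper simply cites the literature.
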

    The invocation of Proposition~\ref{lemma:weissman}, with $T=\sum_{s\in\mathcal{S}}\sum_{a\in\mathcal{A}}n^{s,a}$ yields us a $L_1$ norm bound for the difference of transition kernels associated with a particular state-action pair $s, a$. Next, union bounding over all possible state and action combinations yields us a bound on the total $L_1$ norm.
    \begin{align}
        &\mathbb{P}\Bigg(\bigcup_{s\in\mathcal{S}}\bigcup_{a\in\mathcal{A}}\Big(||\mathcal{T}_{s,a}^*-\hat{\mathcal{T}}_{s,a}||_1\geq \sqrt{\frac{2\log\big((2^S -2)/\delta)\big)}{n^{s,a}}}\Big)\Bigg)\\
        \leq &\sum_{s\in\mathcal{S}}\sum_{a\in\mathcal{A}} \mathbb{P}\Bigg(||\mathcal{T}_{s,a}^*-\hat{\mathcal{T}}_{s,a}||_1\geq \sqrt{\frac{2\log\big((2^S -2)/\delta)\big)}{n^{s,a}}}\Bigg)\\
        \leq &SA\delta.
    \end{align}
    From this, we have that, with probability $1-SA\delta$,
    \begin{equation}
        ||\mathcal{T}^*-\hat{\mathcal{T}}||_1 \leq \epsilon_{\mathrm{Estim}} \leq \sum_{s\in\mathcal{S}}\sum_{a\in\mathcal{A}}\sqrt{\frac{2\log\big((2^S -2)/\delta)\big)}{n^{s,a}}}.
    \end{equation}
    The total $L_1$ norm then scales on the order of $\mathcal{O}(SA\sqrt{S-\log(\delta)}/\sqrt{T})$, which is the final result.
\end{proof}

\clearpage
\section{Details of Planning: \textsc{RiccatiIteration}}\label{sec:riccati}
An LQR-based control system is defined by its system matrices~\citep{kalman1960new}. Let $d_s$ be the state dimensionality and $d_a$ be the action dimensionality. Then, $\mathbf{A} \in \mathbb{R}^{d_s}\times\mathbb{R}^{d_s}$ is a matrix describing state associated state transitions. $\mathbf{B} \in \mathbb{R}^{d_s}\times\mathbb{R}^{d_a}$ is a matrix describing control associated state transitions. The final two system matrices are cost related with $\mathbf{Q} \in \mathbb{R}^{d_s}\times\mathbb{R}^{d_s}$ being a positive definite cost matrix of states and $\mathbf{R} \in \mathbb{R}^{d_a}\times\mathbb{R}^{d_a}$ a positive definite cost matrix of control inputs. The transition model described under this model is given by,
\begin{equation}
   \bm{s}_{t+1}-\bm{s}_t = \mathbf{A}\bm{s}_t + \mathbf{B}\bm{a}_t.
\end{equation}

When an MDP is mentioned in the context of an LQR system in this work, the MDP is the set of system matrices. Further, the cost (or reward) of a policy $\pol$ under an MDP $\mdp$ is
\begin{equation}
    V_{\mdp}^\pol = \sum_{t=0}^T \bm{s}_t^\top \mathbf{Q}\bm{s}_t + \bm{a}_t^\top \mathbf{R}\bm{a}_t.
\end{equation}
Optimal policy identification can be accomplished using~\citet{willems1971least}. It begins by solving for the cost-to-go matrix $\mathbf{P}$ by,
\begin{align*}
   \underset{\mathbf{P}}{\textrm{solve}}~~~~\mathbf{A}^\top \mathbf{P}\mathbf{A}-\mathbf{P}+\mathbf{Q}-(\mathbf{A}^\top \mathbf{P}\mathbf{B})(\mathbf{R}+\mathbf{B}^\top\mathbf{P}\mathbf{B})^{-1}(\mathbf{B}^\top \mathbf{P}\mathbf{A}) = 0.
\end{align*}

Then, using $\mathbf{P}$ the control input $\bm{a}$ for a particular state $\bm{s}$ is
\begin{equation}
   \bm{a} = -(\mathbf{R}+\mathbf{B}^\top\mathbf{P}\mathbf{B})^{-1}(\mathbf{B}^\top\mathbf{P}\mathbf{A})\bm{s}.
\end{equation}

With some abuse of notation and for compactness, we allow ourselves to write $\bm{a}_t = -(\mathbf{R}+\mathbf{B}^\top\mathbf{P}\mathbf{B})^{-1}(\mathbf{B}^\top\mathbf{P}\mathbf{A})\bm{s}_t$ for $\bm{a}_t \sim \pol(\bm{s}_t)$.

\section{Meta-Algorithm for MLEMTRL in the Non-Realisable Setting}\label{sec:meta_mlemtrl}

In order to guarantee good performance even in the non-realisable setting one might think of adding the target task to the set of source tasks or constructing a meta-algorithm, combining the model estimated by MLEMTRL and the empirical estimation of the target task. In this section we propose a meta-algorithm based on the latter, in Algorithm~\ref{alg:meta-mlemtrl}. The main change in the algorithm is internally keeping track of the empirical model and on Line~\ref{lin:meta-mlemtrl}, computing a posterior probability distribution over the respective models by weighting the two likelihoods together with their respective priors. How much the meta-algorithm should focus on the empirical model is then decided by the prior, because $\ell_{\mathrm{Empirical}} \geq \ell_{\mathrm{MLEM}}$. For experimental results using this algorithm, see Figure~\ref{fig:meta_results}.

\begin{algorithm}[ht!]
\caption{Meta-MLEMTRL}\label{alg:meta-mlemtrl}
\begin{algorithmic}[1]
\STATE \textbf{Input:} prior $p$, weights $\bm{w}^0$, $m$ source MDPs $\mathcal{M}_s$, data $D_0$, discount factor $\gamma$, iterations $T$.
\FOR{$t=0, \hdots, T$}
\STATE\textsc{// Stage 1: Obtain Model Weights //}
\STATE $\bm{w}^{t+1}\leftarrow  \textsc{MLEMTRL}(\bm{w}^t, \mathcal{M}_s, \mathcal{D}_t, \gamma, 1)$
\STATE Estimate the MDP: $\mdp^{t+1} = \sum_{i=1}^m w_i \mdp_i$
\STATE Compute log-likelihood $\ell_{\mathrm{MLEM}}^{t+1} = \log \mathbb{P}(\mathcal{D}_t \, | \, \mdp^{t+1})$
\STATE Compute log-likelihood of empirical model  $\ell_{\mathrm{Empirical}}^{t+1} = \log \mathbb{P}(\mathcal{D}_t \, | \, \hat{\mdp}^{t+1})$ 
\STATE Sample $\tilde{\mdp}^{t+1}$ as $\mdp^{t+1}$ w.p. $\propto p\exp\Big(\ell_{\mathrm{MLEM}}^{t+1}\Big)$ and $\hat{\mdp}^{t+1}$ w.p. $\propto (1-p)\exp\Big(\ell_{\mathrm{Empirical}}^{t+1}\Big)$.\label{lin:meta-mlemtrl}
\STATE\textsc{// Stage 2: Model-based Planning //}
\STATE Compute the policy: $\pol^{t+1} \in \underset{\pol}{\arg\max} \, V_{\tilde{\mdp}^{t+1}}^\pol$
\STATE\textsc{// Control //}
\STATE Observe $s_{t+1}, r_{t+1} \sim \mdp^{*}(s_t, a_t), a_t\sim \pol^{t+1}(s_t)$
\STATE Update the dataset $D_{t+1} = D_t \cup \{s_t, a_t, s_{t+1}, r_{t+1}\}$
\ENDFOR
\STATE \textbf{return} An estimated MDP model $\tilde{\mdp}^T$ and a policy $\pol^T$
\end{algorithmic}
\end{algorithm}

\begin{figure}[h!]
    \centering
    \includegraphics[width=0.48\textwidth]{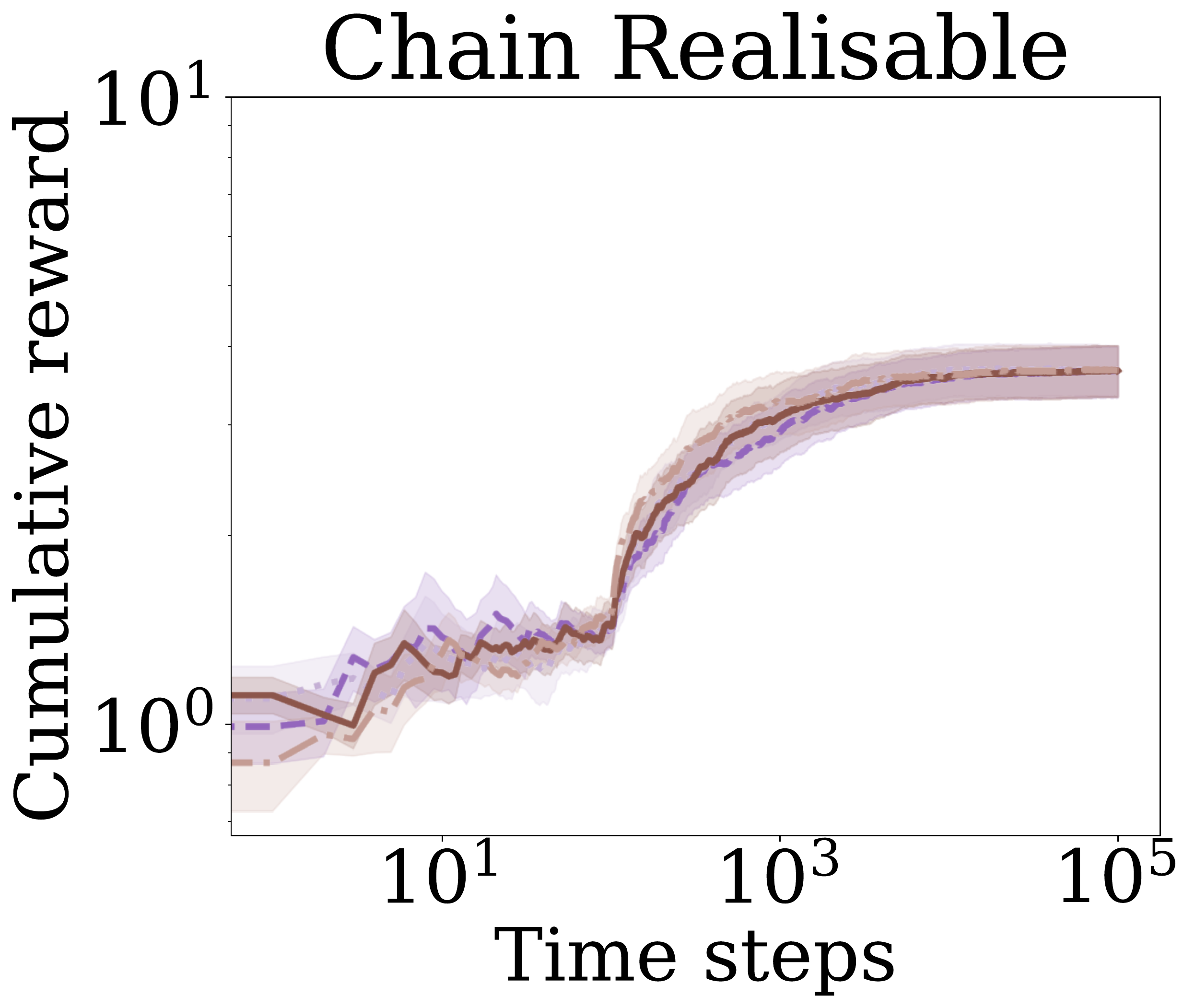} 
    \includegraphics[width=0.48\textwidth]{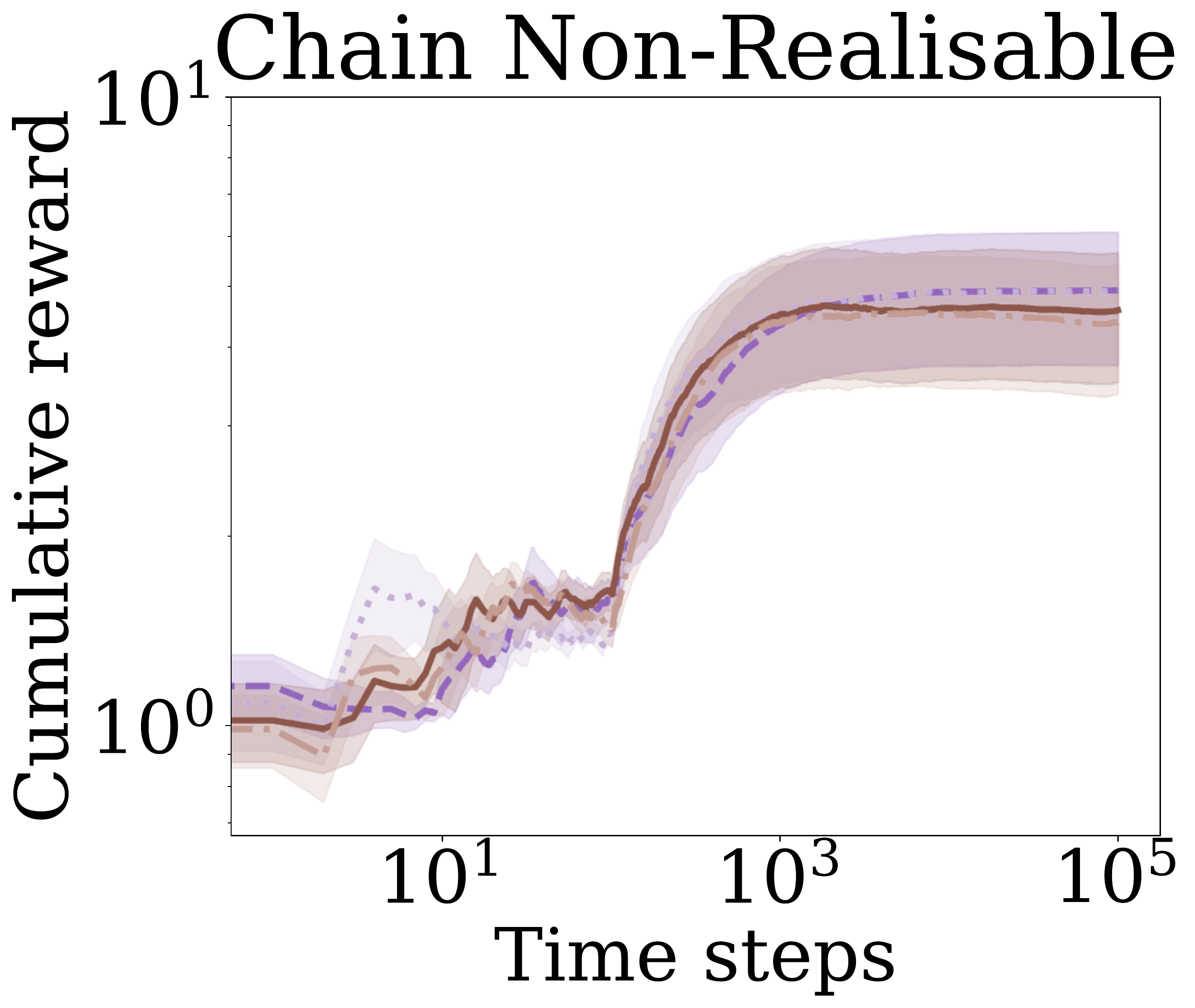}\\
    \includegraphics[width=0.96\textwidth]{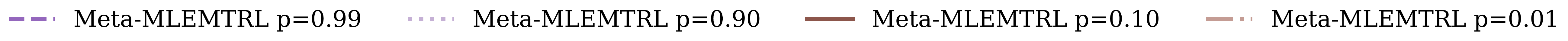}
    \caption{Figure depicting an ablation study of the prior parameter $p$ in the Meta-MLEMTRL algorithm. The y-axis is the average cumulative reward at each time step computed over $10$ novel tasks and the shaded region represents the standard error. When $p=1$, the algorithm reduces to MLEMTRL and when $p=0$ the algorithm reduces to standard maximum likelihood model estimation.}
    \label{fig:meta_results}
\end{figure}

\section{Additional Experimental Analysis}

\subsection{Experimental Setup}\label{sec:rl_env}
The experiments are deployed in \textsc{Python 3.7}, with support from \textsc{SciPy}~\citep{virtanen2020scipy}, \textsc{Stable-baselines3}~\citep{raffin2021stable} and ran on a i5-4690k CPU and a GTX-960 GPU. The parameters for the variations of SAC and PPO are kept to be the default ones.

\noindent\textbf{RL Environments: Chain.} A common testbed for RL algorithms in tabular settings is the Chain~\citep{dearden1998bayesian} environment. In it, there is a chain of states where the agent can either walk forward or backward. At the end of the chain, there is a state yielding the highest rewards. At every step, there is a chance of the effect of the opposite action occurring. This is denoted as the slipping probability. The slippage parameter is also what is used to create the source models, in this case, those parameters are $\{0.01, 0.20, 0.50\}$. 
For PSRL and MLEMTRL we use a product-NormalGamma prior over the reward functions. For PSRL, we use product-Dirichlet priors over the transition matrix.

\noindent\textbf{RL Environments: LQR Tasks.}
We investigate two LQR tasks in the \emph{Deepmind Control Suite}~\citet{tassa2018deepmind}, namely \emph{dm\_LQR\_2\_1} and \emph{dm\_LQR\_6\_2}. These environments are continuous state and actions whereby the task is to control a two joint one actuator and six joint two actuators towards the center of the platform for the two tasks, respectively. They consist of unbounded control inputs and rewards with the state spaces $s \in \mathbb{R}^4$ and $s\in\mathbb{R}^{12}$, respectively. In the Deepmind Control suite every task is made to be different by varying the seed at creation. The seed determines the stiffness of the joints.

\noindent\textbf{RL Environments: CartPole.} We also conduct some experiments on the CartPole~\citet{barto1983neuronlike} environment. In this case, we use a continuous control version of it and formulate it as a LQR problem. The environment has a single continuous action and a state space $s \in \mathbb{R}^4$. To create different tasks we vary the environmental parameters of the problem, namely the gravity, mass of cart, mass of pole the length of the pole.

\subsection{Impacts of Realisability}
In the experiment depicted in Figure~\ref{fig:time}, we investigate the convergence rate and the jumpstart improvement of the MLEMTRL algorithm on $100$ independent target MDP realisations at six different levels of divergence. The divergence is measured from the centroid of the convex hull to the target MDP. Further, in the topmost row, all of the target MDPs belong to the convex hull of source models.

\begin{figure}[h!]
   \centering
   \includegraphics[width=0.55\textwidth]{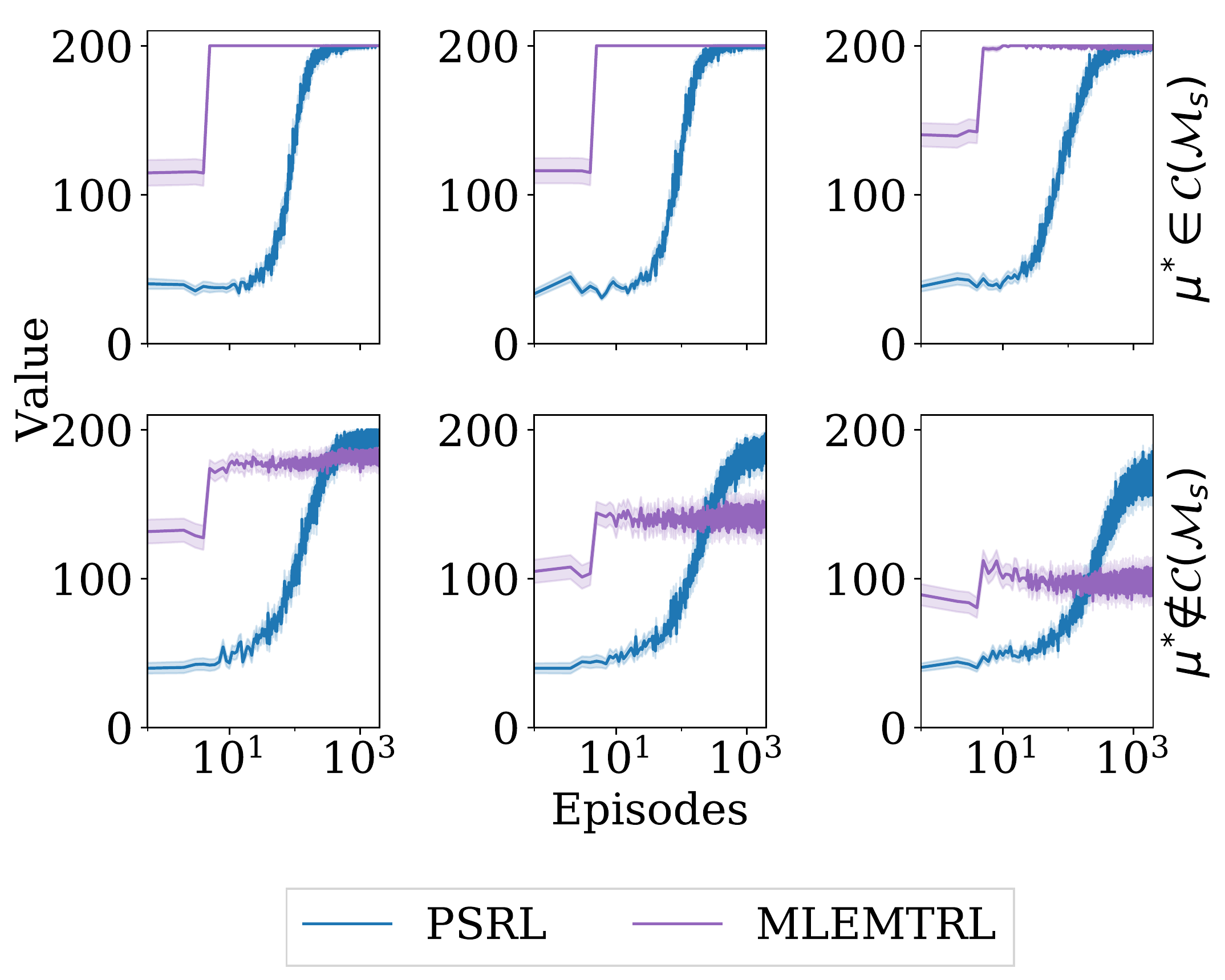}
   \caption{We compare MLEMTRL against PSRL in terms of convergence speed and early performance. The value functions of the algorithms are plotted against the log of the number of episodes. The shaded area is the $68\%$ confidence of the mean over multiple MDPs with the same model dissimilarity. In the topmost row, all of the true MDPs are within the convex hull $\mathcal{C}(\mathcal{M}_s)$. In the bottom row, the MDPs are outside. As you go from top-left to bottom-right, the divergence from the true model to the average model in the convex hull increases. For utility, higher values are better.}  \label{fig:time}
\end{figure}
As we can see, in this setting, identification of the true model occurs rapidly. One reason for this is because of the near-determinism of the environment. Compared to the agent learning from scratch, we observe zero-regret with faster convergence. As we go from top-left to bottom-right, the divergence increases. For the bottom-most row, we can again observe a faster learning rate. In this case, the degradation in performance increases with the divergence, resulting in poor performance in the final case. The experiment demonstrates that under the TRL framework, we require that the source models are not too dissimilar from the target model.

\subsection{Impacts of Multi-Task Learning as a Baseline}
In Figure~\ref{fig:sac_results} we investigate the performance increase of using multi-task RL and transfer RL compared to regular RL. The baseline algorithm is Soft Actor-Critic and its associated multi-task and transfer learning formulations. As we can see, \textbf{MT-SAC-TRL} appears to have overall strongest performance, with \textbf{MT-SAC} a close second. Because of the nature of the problem (unbounded negative rewards), it is also possible for the algorithms to diverge during learning, which further strengthens the argument for using multi-task or transfer learning for robustness.

\begin{figure}[h!]
    \centering
    \includegraphics[width=0.48\textwidth]{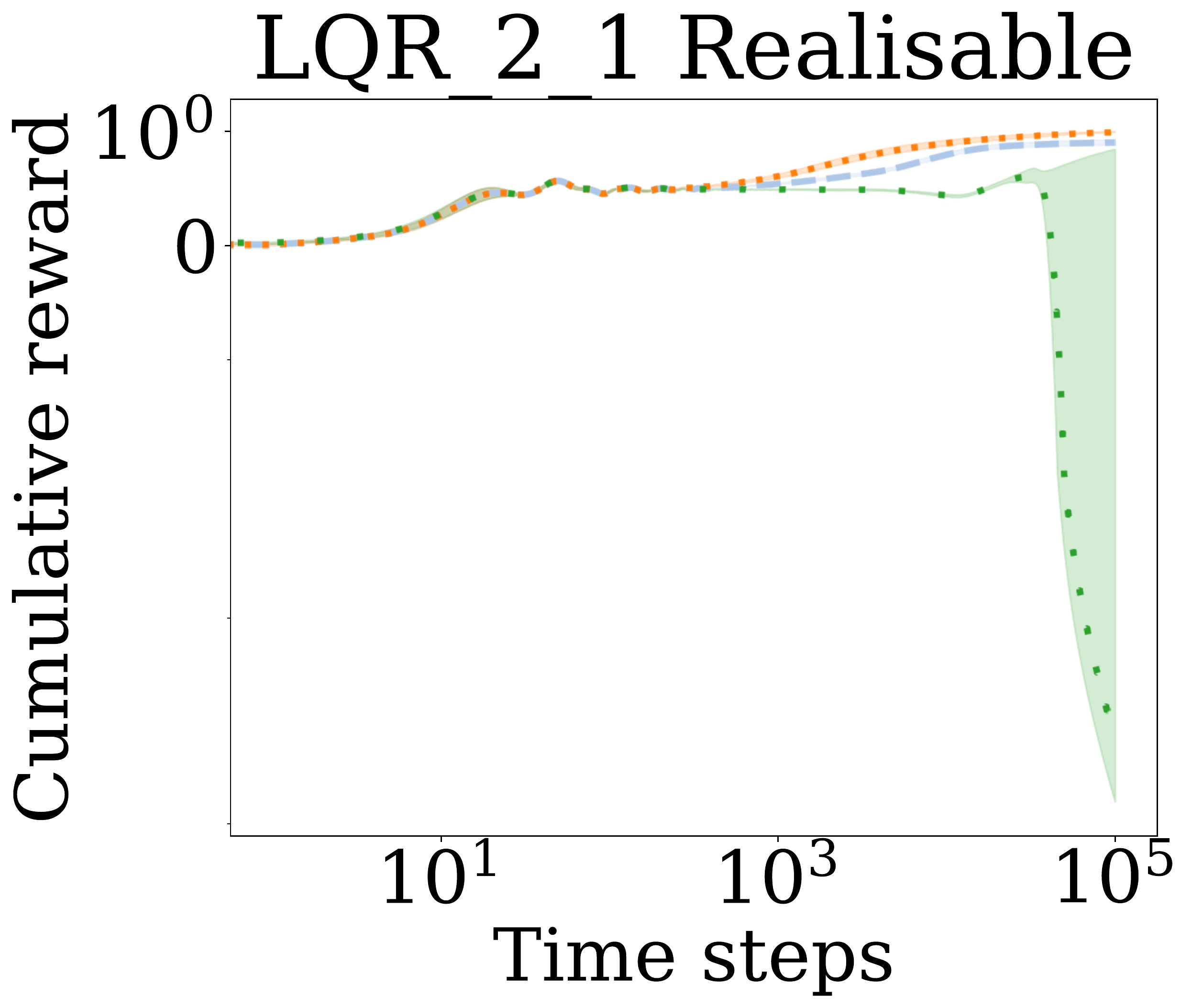} 
    \includegraphics[width=0.48\textwidth]{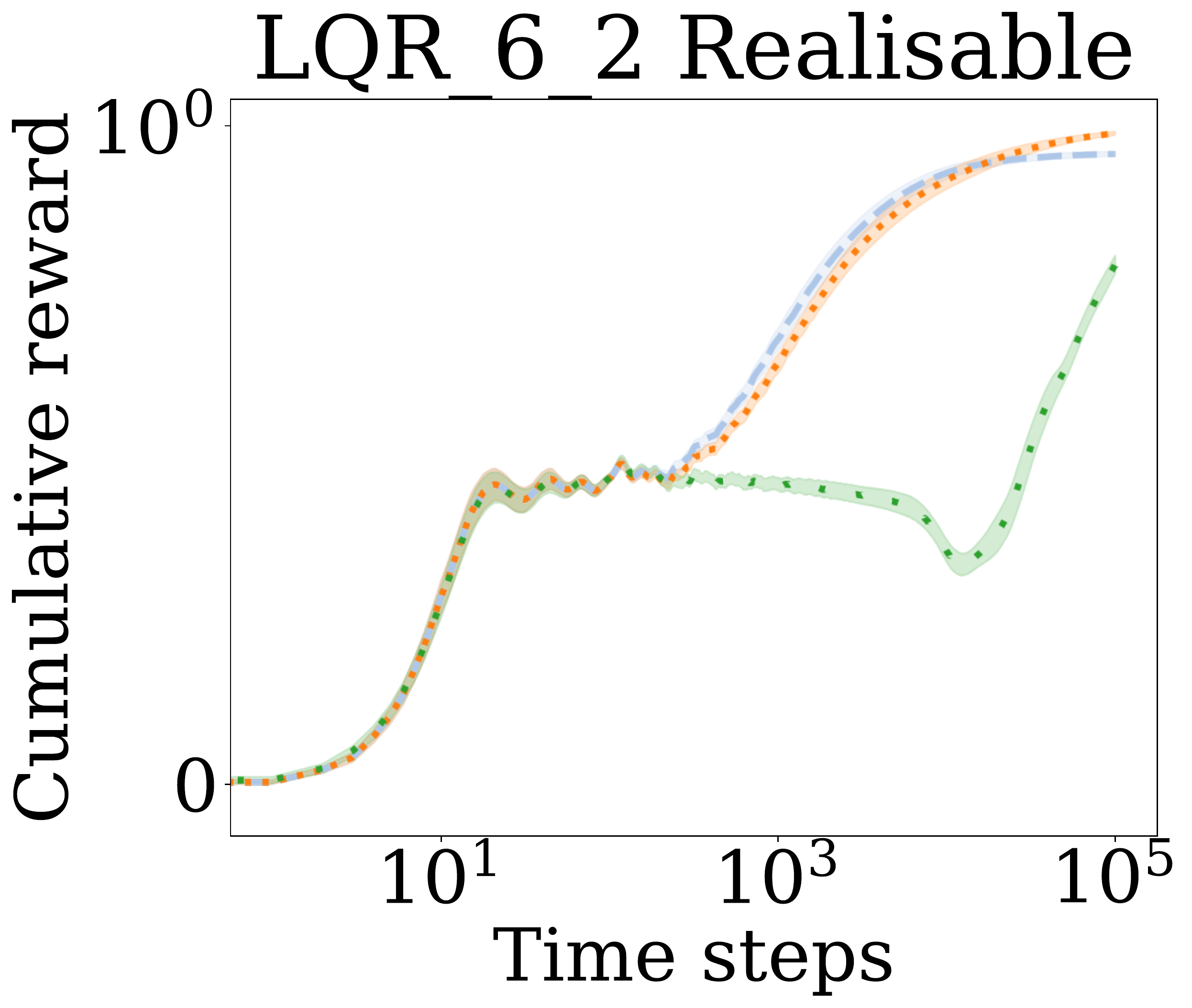}\\
    \includegraphics[width=0.48\textwidth]{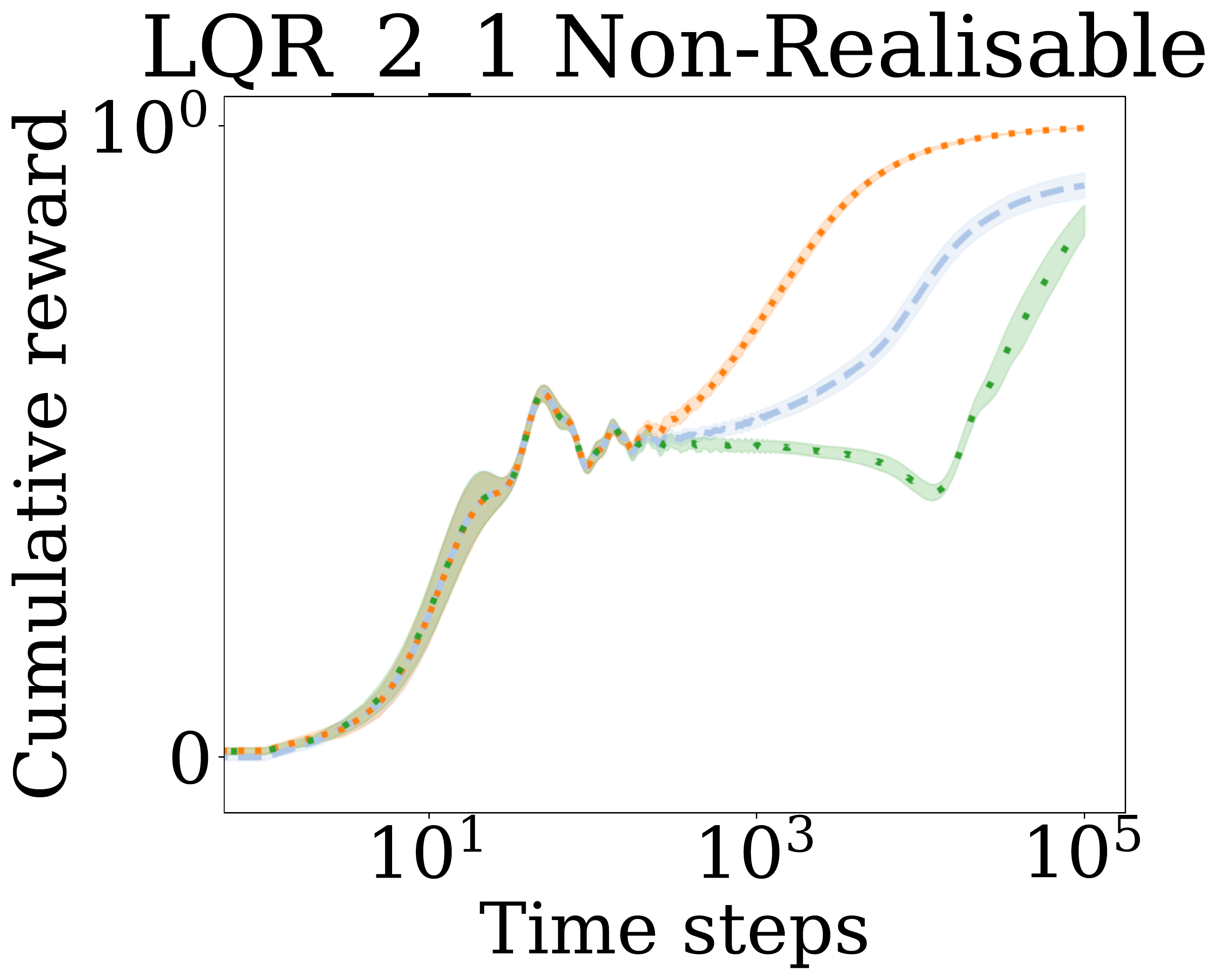} 
    \includegraphics[width=0.48\textwidth]{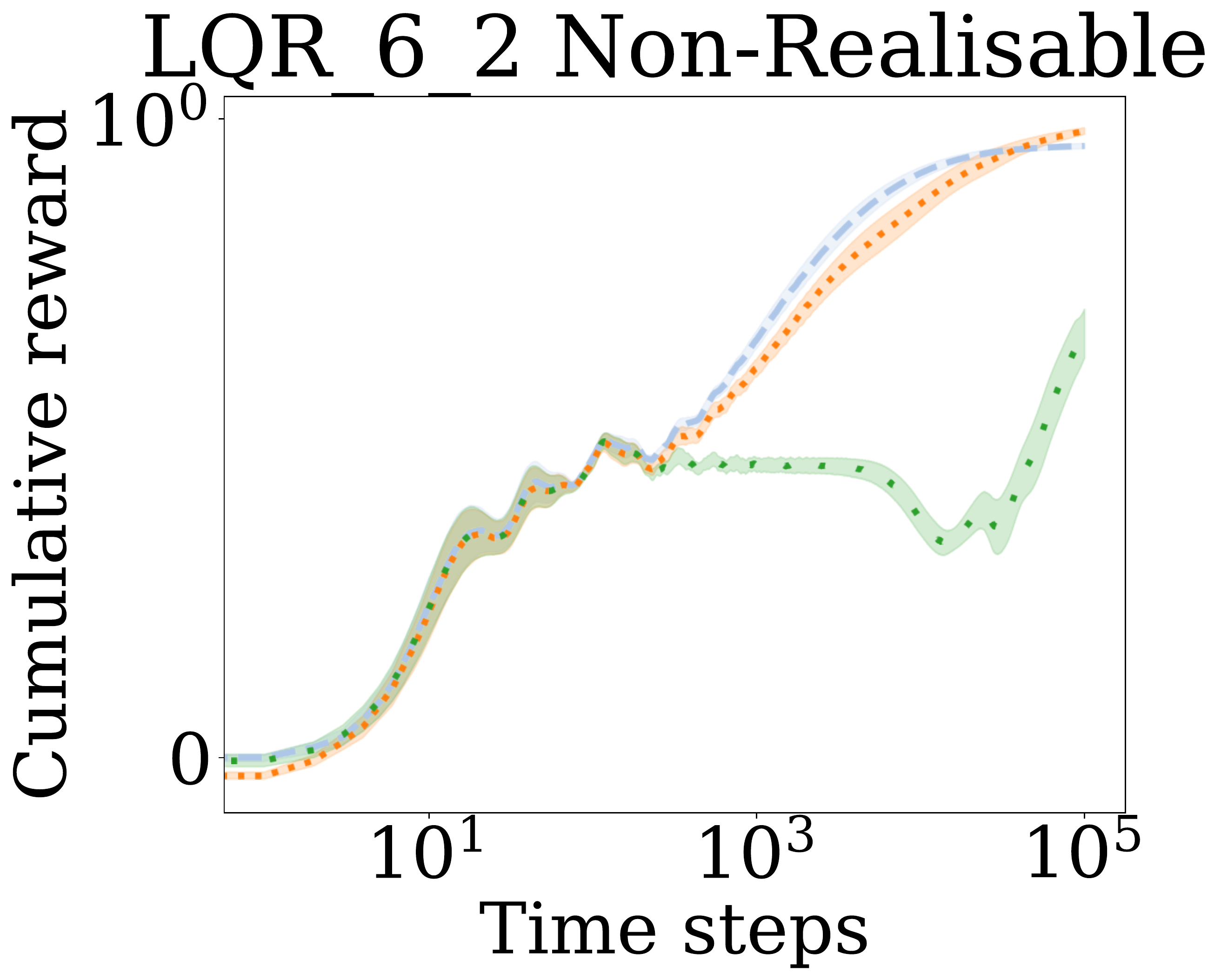}\\
    \includegraphics[width=0.96\textwidth]{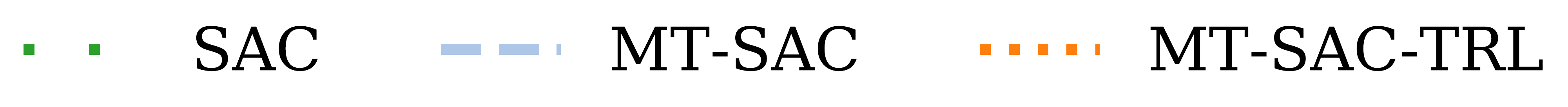}
    \caption{Figure depicting the performance boost of using multi-task and transfer reinforcement learning compared to standard reinforcement learning. The y-axis represents average cumulative reward at every time step and the shaded region is the standard error.}
    \label{fig:sac_results}
\end{figure}

\newpage
\subsection{Model-based Transfer Reinforcement Learning with Known Reward Function}

In Figure~\ref{fig:known_reward_results}, we aim to contrast the difference from the figure in the main paper where now the reward function is known a priori to \textbf{MLEMTRL} and \textbf{PSRL}.
\begin{figure}[h!]
    \centering
    \includegraphics[width=0.48\textwidth]{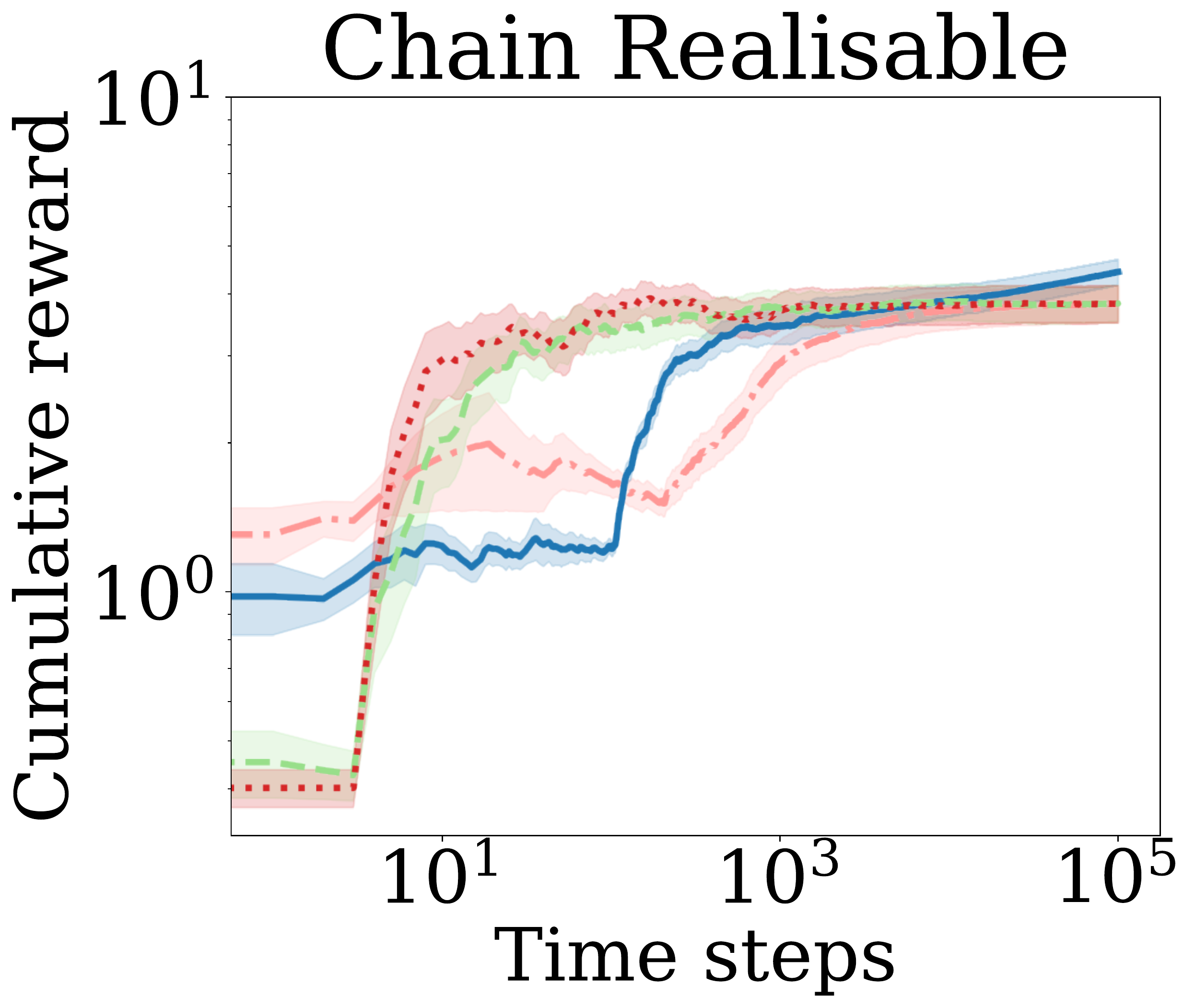} 
    \includegraphics[width=0.48\textwidth]{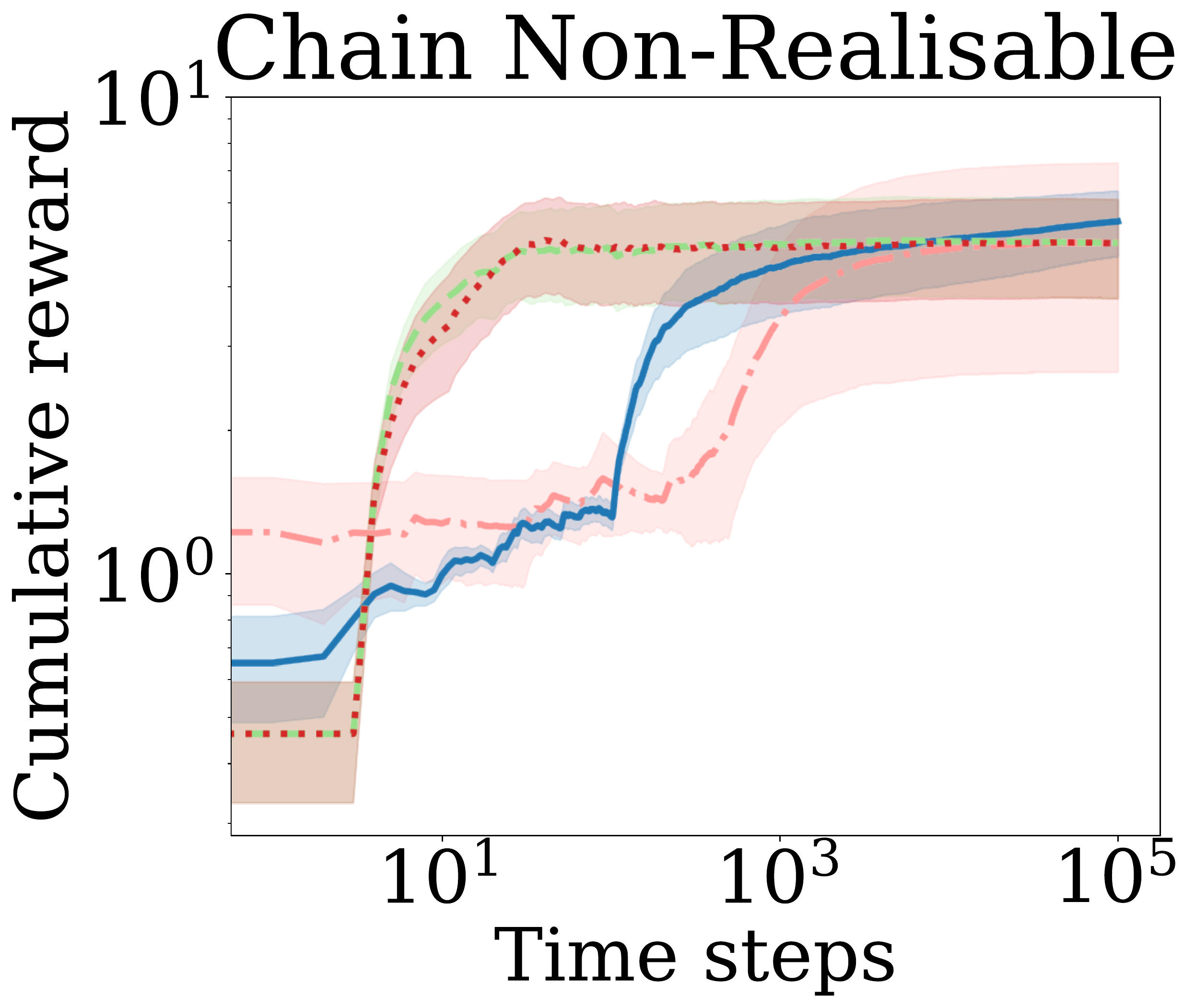}\\
    \includegraphics[width=0.96\textwidth]{img/lqr_legend.pdf}
    \caption{Performance of MLEMTRL, the meta algorithms, and the baselines for the case with known reward function in Chain. The y-axis is the average cumulative reward at every time step and the shaded region is the standard error.}\label{fig:known_reward_results}
\end{figure}

\end{document}